\documentclass[sigconf]{acmart}
\AtBeginDocument{%
  }
\setcopyright{acmlicensed}
\copyrightyear{2026}
\acmYear{2026}
\setcopyright{cc}
\setcctype{by}
\acmConference[WWW '26]{Proceedings of the ACM Web Conference 2026}{April 13--17, 2026}{Dubai, United Arab Emirates}
\acmBooktitle{Proceedings of the ACM Web Conference 2026 (WWW '26), April 13--17, 2026, Dubai, United Arab Emirates}
\acmPrice{}
\acmDOI{10.1145/3774904.3792704}
\acmISBN{979-8-4007-2307-0/2026/04}
\usepackage{graphicx}
\usepackage{soul}
\usepackage{amsmath}
\usepackage{amsfonts}
\usepackage{amsthm}
\usepackage{textcomp}
\usepackage[linesnumbered,ruled]{algorithm2e}
\usepackage{float}
\usepackage{bm}
\usepackage{multirow}
\makeatletter
\renewcommand{\maketag@@@}[1]{\hbox{\m@th\normalsize\normalfont#1}}%
\makeatother
\usepackage{array} 
\renewcommand\arraystretch{1.3}
\theoremstyle{plain}

\newtheorem{proposition}{Proposition}

\theoremstyle{remark}

\usepackage[most]{tcolorbox}  
\newtcolorbox{problemBox}{
  colback=gray!10,    
  colframe=gray!60,   
  boxrule=0.5pt,      
    coltitle=white,     
  colbacktitle=black,   
  fonttitle=\bfseries, 
  title={Selected Fuzzy Rules}, 
  arc=3pt,            
  left=2pt, right=2pt, top=1pt, bottom=1pt, 
  width=0.48\textwidth, 
  fontupper=\small,   
}
\usepackage{setspace}
\newtcolorbox{problemBox2}{
  colback=gray!10,    
  colframe=gray!60,   
  coltitle=white,     
  colbacktitle=black,   
  fonttitle=\bfseries, 
  title={Node Role-Guided Description}, 
  boxrule=0.5pt,      
  arc=3pt,            
  left=3pt, right=3pt, top=2pt, bottom=2pt, 
  width=0.48\textwidth,
fontupper=\small\setstretch{0.9}   
}
\newtcolorbox{problemBox3}{
  colback=gray!10,    
  colframe=gray!60,   
  coltitle=white,     
  colbacktitle=black,   
  fonttitle=\bfseries, 
  title={Chain-of-Thought Reasoning for Dynamic Graphs}, 
  boxrule=0.5pt,      
  arc=3pt,            
  left=3pt, right=3pt, top=2pt, bottom=2pt, 
  width=0.48\textwidth,
fontupper=\small\setstretch{0.9}   
}
\newtcolorbox{problemBox4}{
  colback=gray!10,    
  colframe=gray!60,   
  coltitle=white,     
  colbacktitle=black,   
  fonttitle=\bfseries, 
  title={Chain-of-Thought Reasoning for Dynamic Graphs}, 
  boxrule=0.5pt,      
  arc=3pt,            
  left=3pt, right=3pt, top=2pt, bottom=2pt, 
  width=0.48\textwidth,
fontupper=\small\setstretch{0.9}   
}

\newtcolorbox{problemBox5}{
  colback=gray!10,    
  colframe=gray!60,   
  coltitle=white,     
  colbacktitle=black,   
  fonttitle=\bfseries, 
  title={Final Explanation Generation}, 
  boxrule=0.5pt,      
  arc=3pt,            
  left=3pt, right=3pt, top=2pt, bottom=2pt, 
  width=0.48\textwidth,
fontupper=\small\setstretch{0.9}   
}

\usepackage{multirow}
\usepackage{graphicx} 

\begin{document}

\title[Node Role-Guided LLMs for Dynamic Graph Clustering
]{Node Role-Guided LLMs for Dynamic Graph Clustering
}
\author{Dongyuan Li}
\affiliation{\institution{The University of Tokyo}
\city{Tokyo}
\country{Japan}}
\orcid{0000-0002-4462-3563}
\email{lidy@csis.u-tokyo.ac.jp}

\author{Ying Zhang}
\affiliation{%
  \institution{RIKEN}
  \city{Sendai}
  \country{Japan}
}
\orcid{0009-0000-9627-8768}
\email{ying.zhang@riken.jp}

\author{Yaozu Wu}
\affiliation{%
  \institution{The University of Tokyo}
  \city{Tokyo}
  \country{Japan}}
\orcid{0009-0005-9766-2186}
\email{yaozuwu279@gmail.com}

\author{Renhe Jiang}
\authornote{Corresponding author.}
\affiliation{%
  \institution{The University of Tokyo}
  \city{Tokyo}
  \country{Japan}}
\orcid{0000-0003-2593-4638}
\email{jiangrh@csis.u-tokyo.ac.jp}

\renewcommand{\shortauthors}{Dongyuan Li, Ying Zhang, Yaozu Wu, and Renhe Jiang}

\begin{abstract}
Dynamic graph clustering aims to detect and track time-varying clusters in dynamic graphs, revealing how complex real-world systems evolve over time. However, existing methods are predominantly black-box models. They lack interpretability in their clustering decisions and fail to provide semantic explanations of why clusters form or how they evolve, severely limiting their use in safety-critical domains such as healthcare or transportation. To address these limitations, we propose an end-to-end interpretable framework that maps continuous graph embeddings into discrete semantic concepts through learnable prototypes. Specifically, we first decompose node representations into orthogonal role and clustering subspaces, so that nodes with similar roles (e.g., hubs, bridges) but different cluster affiliations can be properly distinguished. We then introduce five node role prototypes (Leader, Contributor, Wanderer, Connector, Newcomer) in the role subspace as semantic anchors, transforming continuous embeddings into discrete concepts to facilitate LLM understanding of node roles within communities. Finally, we design a hierarchical LLM reasoning mechanism to generate both clustering results and natural language explanations, while providing consistency feedback as weak supervision to refine node representations. Experimental results on four synthetic and six real-world benchmarks demonstrate the effectiveness, interpretability, and robustness of DyG-RoLLM. Code is available at https: //github.com/Clearloveyuan/DyG-RoLLM.
\end{abstract}
\begin{CCSXML}
<ccs2012>
   <concept>
       <concept_id>10010147.10010178.10010187.10010193</concept_id>
       <concept_desc>Computing methodologies~Temporal reasoning</concept_desc>
       <concept_significance>500</concept_significance>
       </concept>
 </ccs2012>
\end{CCSXML}

\ccsdesc[500]{Computing methodologies~Temporal reasoning}
\keywords{Graph Clustering, Dynamic Graphs, Temporal Networks, Community Detection, LLM Reasoning, Interpretable Graph Learning.}

\maketitle

\section{Introduction}

Dynamic graph clustering, also known as dynamic community detection, aims to identify and track evolving communities by leveraging both topological structures and temporal dependencies in dynamic graphs~\cite{DBLP:journals/www/RanjkeshMH24,DBLP:conf/ijcnn/LiuWZY19,DBLP:journals/tkde/ChenJLLLCYH24}. 
As an effective tool to reveal the evolutionary mechanisms underlying complex real-world systems, 
dynamic graph clustering has drawn significant attention in various domains, such as social networks analysis~\cite{zhang2022robust,LiYicong,zhong2024efficient,ji2024memmap}, recommendation systems ~\cite{Jiasu,zhang2023dyted,WangWLGLZ22,zhang2023tiger}, and AI4Science~\cite{JiShuo,shi2024graph,sun2024fast}. 

Numerous methods have been proposed for dynamic graph clustering in recent years, which can be broadly classified into three categories~\cite{DBLP:journals/csur/RossettiC18}. \textbf{\textit{(i) Neural network-based methods}}~\cite{DBLP:conf/aaai/YaoJ21,DBLP:journals/tcyb/GaoZZWL23,DBLP:conf/kdd/YouDL22,DBLP:conf/iclr/001400T00024} focus on learning dynamic node representations, with clustering applied as post-processing. 
This two-stage pipeline separates representation learning from clustering, leading to sub-optimal performance~\cite{DBLP:conf/ijcai/DongZSLC18,9531337}. 
\textbf{\textit{(ii) Matrix Factorization-based methods}}~\cite{Chakrabarti,DBLP:journals/tec/MaWWL24,Danon_2005,9531337,LiuF1801,Liu2019} address this limitation by jointly optimizing representation learning and clustering. 
These methods cluster nodes at each timestamp while maintaining temporal smoothness, achieving superior performance on multiple benchmarks~\cite{li2025revisiting,ostroski2025scalable}. 
Recently, \textbf{\textit{(iii) LLM-based methods}}~\cite{ni2024comgpt,gujral2025llms} have emerged to enable semantic reasoning about community structures. 
These approaches integrate GNNs with LLMs through two paradigms: GNN-as-predictor methods~\cite{tang2024graphgpt,zhang2024graphtranslator} enhance community detection by enriching node embeddings with LLM-extracted semantic features, while LLM-as-predictor approaches~\cite{ni2024comgpt,zhou2025few} directly infer cluster assignments by transforming graph topology into natural language prompts. By incorporating language models, these methods identify clusters based on semantic similarity rather than purely structural connections, enabling the detection of functionally coherent clusters that traditional topology-based clustering might miss~\cite{gujral2025llms}.

Despite their success, existing dynamic graph clustering methods share a fundamental limitation: \textbf{\textit{they produce black-box clustering decisions without explanations}}, leaving users unable to understand why certain nodes form clusters or how these clusters evolve over time. This lack of interpretability critically limits their adoption in high-risk applications. For instance, in epidemic control, public health officials need to understand not just which clusters are at risk, but why certain groups form transmission clusters and how intervention strategies might affect cluster evolution~\cite{tran2025fine}.
Although some works use motifs~\cite{chen2023tempme} or causal theory~\cite{zhao2024causality} for explanations, 
such non-textual explanations are abstract, hard to verify, and often fail in complex cases~\cite{lukyanov2025robustness}. 
The emergence of LLMs with strong reasoning capabilities~\cite{chu-etal-2024-navigate} offers a promising direction: using human-understandable natural language to explain clustering decisions. 
However, enabling faithful explanations for graph clustering raises three challenges:
\textbf{\textit{(i) Clustering-aware Representation}}: How can we learn node representations that capture community structure and temporal dynamics while remaining interpretable for downstream explanations? 
\textbf{\textit{(ii) Structure-Semantic Alignment}}: How can we present continuous node representations to LLMs so that structural and temporal properties are preserved for language-based reasoning?
\textbf{\textit{(iii) Reasoning Guidance}}: How can we guide LLMs to generate consistent, verifiable explanations aligned with the graph rather than hallucinating plausible but incorrect rationales?

To answer these questions, we propose DyG-RoLLM{\setlength{\skip\footins}{0.5pt}\setlength{\footnotesep}{0.25\baselineskip}\footnote{DyG-RoLLM denotes \textbf{\underline{Dy}}namic \textbf{\underline{G}}raph Clustering via Node \textbf{\underline{Ro}}le-guided \textbf{\underline{LLM}}.}}, the first framework to provide interpretable dynamic graph clustering via bidirectional GNN-LLM interaction.
\textbf{\textit{First}}, to learn clustering-aware representation, we simultaneously learn clustering patterns and interpretable features through orthogonal subspace decomposition. GAT-GRU\footnote{Graph Attention Network (GAT) encoder with a Gated Recurrent Unit (GRU) updater.} captures structural-temporal dynamics, whose outputs are then separated into two orthogonal subspaces: one optimizes modularity for community detection, while the other learns features for semantic interpretation. This orthogonal design prevents the two objectives from interfering with each other.
\textbf{\textit{Second}}, to establish structure-semantic alignment, the interpretable subspace specifically learns five role prototypes (Leader, Contributor, Wanderer, Connector, Newcomer) that bridge continuous embeddings to discrete language. These prototypes transform node representations into role distributions (e.g., node $v_{24}$ has 70\% Leader traits), which generate hierarchical descriptions spanning node-level roles, community-level compositions, and evolution-level patterns. This multi-level semantic representation provides LLMs with comprehensive context for understanding each node's position, function, and trajectory within the dynamic graph.
\textbf{\textit{Finally}}, to provide reasoning guidance, we leverage these hierarchical descriptions in a chain-of-thought framework where LLMs analyze role-community compatibility, supply-demand imbalances, and temporal consistency to determine cluster assignments. 
Crucially, high-confidence LLM predictions serve as soft supervision for GNN training through consistency loss, while GNN provides structural context for LLM reasoning. This bidirectional loop enables semantic understanding to refine structural clustering and structural patterns to guide semantic interpretation, yielding both accurate clustering and faithful explanations. The main contributions can be summarized as:
\begin{itemize}
\item We formalize the interpretable dynamic graph clustering problem, and propose the first framework that generates natural language explanations alongside clustering decisions.
\item We introduce role prototypes as semantic anchors that bridge neural and symbolic spaces, enabling the derivation of community and evolution interpretations from node-level roles.
\item We establish a GNN-LLM interaction with CoT reasoning for dynamic graphs, validated via case studies showing how LLM reasoning corrects structural misclustering.
\item Experimental results on ten benchmarks demonstrate that DyG-RoLLM achieves state-of-the-art clustering performance while providing human-interpretable explanations.
\end{itemize}

\section{Related Work}
\label{related}

Graph Neural Networks have gradually become the dominant paradigm for training graph models, and a wealth of outstanding works has emerged accordingly~\cite{xia2025graph,yu2025cage,yu2025graph2text,yu2024formulating,yu2025path,zheng2025dp,chen2024towards,yang2025okg,kong2024traffexplainer}.
Current dynamic graph clustering methods can be mainly classified into three categories~\cite{9388900,LiLM21,Dyg-Mamba}: Neural Network (NN)-based methods, Matrix Factorization (MF)-based methods, and LLM-based methods. 
\textit{\textbf{First}}, NN-based methods often employ a coupled strategy, where they first condense dynamic graphs into a single static graph and then apply clustering methods to identify clusters~\cite{9314087, DBLP:journals/kbs/SantoGMS21,DBLP:journals/pami/ZhangNL23,zheng2022instant}. Other NN-based methods employ two-stage strategies, where they first learn dynamic graph embeddings and then apply clustering methods on these embeddings to identify communities~\cite{beer2023connecting,DBLP:conf/kdd/ZhangCFXZSC23,guo2022subset,zhao2023spatial,DBLP:conf/KDD/CrossCityTransfer22,DBLP:journals/tnn/CuiLWZLWA24,Namyong,yang2024effective,chen2022efficient}. 
For example, RNNGCN~\cite{DBLP:conf/aaai/YaoJ21} and DGCN~\cite{DBLP:journals/tcyb/GaoZZWL23} use RNNs or LSTMs to capture temporal dependencies for graph embeddings, which are then clustered using graph convolutional layers. 
ROLAND~\cite{DBLP:conf/kdd/YouDL22} extends static GNN-based graph embedding methods to dynamic graphs by using GRUs to capture temporal information. 
\textit{\textbf{Second}}, MF-based methods perform matrix factorization at each timestamp to cluster nodes while enforcing temporal consistency across snapshots. 
These methods differ primarily in how they measure and maintain temporal smoothness. One category exploits topology changes: sE-NMF~\cite{Ma17b}, jLMDC~\cite{9531337}, and NE2NMF~\cite{DBLP:journals/kbs/LiZDGM21} analyze changes between consecutive graphs, while PisCES~\cite{LiuF1801} considers global topology evolution. 
Another category directly optimizes clustering metrics: DynaMo~\cite{DBLP:journals/tkde/ZhuangCL21} incrementally maximizes modularity, and MODPSO~\cite{DBLP:journals/isci/YinZLD21} minimizes cross-timestamp NMI. 
\textit{\textbf{Third}}, LLM-based methods integrate GNNs with LLMs for graph understanding, following two primary paradigms depending on which component serves as the final predictor. 
\textit{GNN-as-predictor} approaches~\cite{tang2024higpt,ma2024xrec,he2024g,zhao2023gimlet,tian2024graph,he2024unigraph} enhance GNN inputs through LLM-processed features. 
For example, GraphGPT~\cite{tang2024graphgpt} enhances graph prediction by combining semantic embeddings from LLMs with structural embeddings from GNNs.  
GraphTranslator~\cite{zhang2024graphtranslator} employs LLMs to extract semantic features from text descriptions of graphs. 
UniGraph~\cite{he2024unigraph} uses pre-trained LLMs to encode textual node attributes. 
\textit{LLM-as-predictor} methods~\cite{xie2023graph,wei2024llmrec,xia2024opengraph,guo2024graphedit,ren2024representation} transform graph structure into natural language for direct LLM reasoning. For example, ComGPT~\cite{ni2024comgpt}  encodes adjacency information with iterative prompting, STC-CDP~\cite{zhou2025few} combines structural descriptions with few-shot demonstrations, and CommLLM~\cite{gujral2025llms} converts entire graphs into text narratives. 
\textbf{\textit{In contrast to all existing methods}}, DyG-RoLLM fundamentally reimagines dynamic graph clustering through three key innovations: \textit{(i)} While existing methods produce black-box results, we generate natural language explanations for every clustering decision; \textit{(ii)} Unlike unidirectional GNN-LLM pipelines, we establish bidirectional interaction where LLMs provide soft supervision for GNN training while GNNs guide LLM reasoning; \textit{(iii)} Beyond simple feature enhancement, our learnable role prototypes create semantic bridges that enable reasoning about why communities form and how they evolve. 

\section{Preliminary}
\label{preliminaries}

\noindent \textbf{Dynamic Graph Clustering.} 
We consider a dynamic graph as a sequence of snapshots $\mathcal{G} = \{G_1, G_2, \ldots, G_\tau\}$ and the $t$-th snapshot $G_{t}=(V^{t}, E^{t}$, $X^{t}$), defined for $0\leq t\leq \tau$. Here, ${V}^{t}$, ${E}^{t}$ and ${X}^{t}$ represent the set of nodes, edges, and node features at $t$-th snapshot. 
Dynamic community detection aims to partition the node set $V^t$ into non-overlapping clusters \begin{small}$\mathbf{C}^t = \{\mathbf{C}_1^t, \ldots, \mathbf{C}_{K_t}^t\}$\end{small} where $K_t$ denotes the number of clusters at $t$-th timestamp, \begin{small}$\bigcup_{k=1}^{K_t} \mathbf{C}_k^t = V^t$\end{small}, and \begin{small}$\mathbf{C}_i^t \cap \mathbf{C}_j^t = \emptyset$\end{small} for $i \neq j$.
Beyond traditional clustering, our task learns the mapping 
\begin{small}$\Psi: G^t \rightarrow (\mathcal{C}^t, \{\mathcal{E}_i\}_{i=1}^{|V^t|})$\end{small}, where $\mathcal{E}_i$ represents hierarchical natural language explanations for node $i$'s community assignment.

\begin{figure*}[t]
\centering
\includegraphics*[clip=true,width=\textwidth]{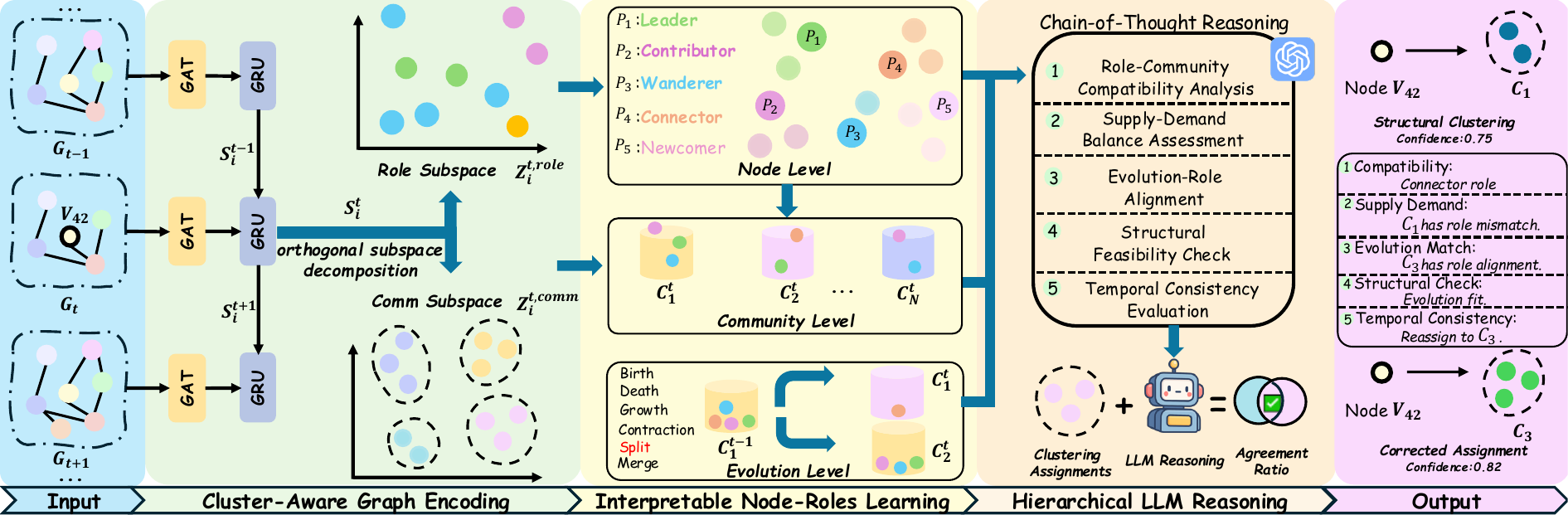}
\caption{Overview of DyG-RoLLM. We first learn cluster-aware representation through orthogonal decomposition, then bridge them to semantics via node-roles prototypes, and finally enable hierarchical LLM reasoning for interpretable clustering.}\label{figure:illustrate_smftii2}
\end{figure*}

\section{Methodology}
\label{methodology}

As shown in Figure~\ref{figure:illustrate_smftii2}, DyG-RoLLM consists of three components: \textbf{(i) Clustering-aware Graph Encoding} employs orthogonal subspace decomposition to separate role features from community features ({Sec.\ref{sec-M1}}), \textbf{(ii) Interpretable Node-Roles Learning} defines semantic prototypes bridging graph embeddings with natural language ({Sec.\ref{sec-M2}}), and \textbf{(iii) Hierarchical LLM Reasoning} performs structured CoT for interpretable clustering ({Sec.\ref{sec-M3}}).

\subsection{Clustering-Aware Graph Encoding}
\label{sec-M1}

To capture evolving clustering patterns in dynamic graphs, we design a dynamic graph encoding module with \textbf{orthogonal subspace decomposition}. This module learns node representations that jointly encode node roles and clusters in independent subspaces, providing the foundation for interpretable graph clustering.

\subsubsection{Dynamic Graph Encoding.}
Given a dynamic graph $\mathcal{G}$, we first encode the structural information at each timestep $G_t=\{V^t,E^t\}$ using a multi-layer GAT. 
For a node $i$ at time $t$ with feature vector $\mathbf{x}_i^t \in \mathbb{R}^{d}$, the $l$-th GAT layer computes the node representation as:
\begin{footnotesize}
\begin{equation}
\mathbf{h}_i^{t,(l)} = \text{ReLU}(\sum_{j \in \mathcal{N}_i^{t}} \alpha_{ij}^{t,(l)} \mathbf{W}^{(l)} \mathbf{h}_j^{t,(l-1)})
\end{equation}
\end{footnotesize}where $\mathbf{h}_i^{t,(0)} = \mathbf{x}_i^t$, $\mathcal{N}_i^{t}$ is the neighborhood of node $i$ at timestep $t$, \begin{small}$\mathbf{W}^{(l)}$\end{small} is the trainable weight matrix, and \begin{small}$\alpha_{ij}^{t,(l)}$\end{small} can be computed as:
\begin{footnotesize}
\begin{equation}
\alpha_{ij}^{t,(l)} = \frac{\exp\left(\text{LeakyReLU}\left(\mathbf{a}^{\top}[\mathbf{W}^{(l)} \mathbf{h}_i^{t,(l-1)} \| \mathbf{W}^{(l)} \mathbf{h}_j^{t,(l-1)}]\right)\right)}{\sum_{m \in \mathcal{N}_i^t \cup \{i\}} \exp\left(\text{LeakyReLU}\left(\mathbf{a}^{\top}[\mathbf{W}^{(l)} \mathbf{h}_i^{t,(l-1)} \| \mathbf{W}^{(l)} \mathbf{h}_m^{t,(l-1)}]\right)\right)},
\end{equation}
\end{footnotesize}

\noindent where $\|$ denotes concatenation and $\mathbf{a} \in \mathbb{R}^{2d}$ is a learnable vector. 
To capture temporal dependencies across snapshots, we employ a Gated Recurrent Unit (GRU) that maintains node states over time:
\begin{footnotesize}
\begin{equation}
\mathbf{s}_i^{t} = \text{GRU}\left(\mathbf{h}_i^{t,(L)}, \mathbf{s}_i^{t-1}\right)
\end{equation}
\end{footnotesize}where $\mathbf{h}_i^{t,(L)}$ is the final output, and $\mathbf{s}_i^{t} \in \mathbb{R}^{d}$ represents the node embedding at timestep $t$. A fundamental conflict arises: nodes with similar roles should cluster together, yet may belong to different communities requiring separation. We solve this via \textit{orthogonal subspace decomposition}, as guaranteed by the following proposition.

\begin{proposition}[Gradient Isolation via Orthogonal Decomposition]
\label{thm:gradient_isolation}
Given a decomposition function $\mathcal{F}:\mathbb{R}^{d}\to\mathbb{R}^{d_1}\times\mathbb{R}^{d_2}$ defined as $\mathcal{F}(\mathbf{x}) = [\mathbf{W}_1 \mathbf{x}\,\|\, \mathbf{W}_2 \mathbf{x}]$ where $\mathbf{W}_1$ and $\mathbf{W}_2$ are projection matrices satisfying the orthogonality constraint $\mathbf{W}_1 \mathbf{W}_2^{\top} = \mathbf{0}$. For a composite loss $\mathcal{L} = \mathcal{L}_1(\mathbf{W}_1\mathbf{x}) + \mathcal{L}_2(\mathbf{W}_2\mathbf{x})$, the following gradient isolation property holds: $\nabla_{\mathbf{W}_1} \mathcal{L}_1 = \mathbf{0}$ and $\nabla_{\mathbf{W}_2} \mathcal{L}_2 = \mathbf{0}$. This ensures that optimizing one objective does not interfere with the other, allowing both tasks to achieve their respective optima without compromise.
\end{proposition}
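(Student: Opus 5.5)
The plan is to compute every gradient explicitly with the chain rule and then see what the orthogonality constraint $\mathbf{W}_1\mathbf{W}_2^{\top}=\mathbf{0}$ buys us. Write $\mathbf{u}=\mathbf{W}_1\mathbf{x}$, $\mathbf{v}=\mathbf{W}_2\mathbf{x}$, $\mathbf{g}_1=\nabla_{\mathbf{u}}\mathcal{L}_1$ and $\mathbf{g}_2=\nabla_{\mathbf{v}}\mathcal{L}_2$. The chain rule then gives
\[
\nabla_{\mathbf{W}_1}\mathcal{L}_1=\mathbf{g}_1\mathbf{x}^{\top},\qquad
\nabla_{\mathbf{W}_2}\mathcal{L}_2=\mathbf{g}_2\mathbf{x}^{\top},\qquad
\nabla_{\mathbf{W}_2}\mathcal{L}_1=\mathbf{0},\qquad
\nabla_{\mathbf{W}_1}\mathcal{L}_2=\mathbf{0}.
\]
The last two identities hold because $\mathcal{L}_1$ depends on the parameters only through $\mathbf{W}_1$, and $\mathcal{L}_2$ only through $\mathbf{W}_2$. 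Consequently
\[
\nabla_{\mathbf{W}_1}\mathcal{L}=\nabla_{\mathbf{W}_1}\mathcal{L}_1
\quad\text{and}\quad
\nabla_{\mathbf{W}_2}\mathcal{L}=\nabla_{\mathbf{W}_2}\mathcal{L}_2,
\]
so each projection is updated only by its own objective.

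This computation also exposes a problem with the statement as literally written. The identity $\nabla_{\mathbf{W}_1}\mathcal{L}_1=\mathbf{0}$ fails in general. For example, take $d=d_1=d_2=1$ with $\mathbf{W}_2=0$, $\mathcal{L}_1(u)=u$ and $x=1$; then $\nabla_{\mathbf{W}_1}\mathcal{L}_1=1\neq 0$. I would therefore prove the intended reading, in which the \emph{cross} gradients vanish: $\nabla_{\mathbf{W}_1}\mathcal{L}_2=\mathbf{0}$ and $\nabla_{\mathbf{W}_2}\mathcal{L}_1=\mathbf{0}$. I would flag this correction explicitly. At the level of the projection weights this part does not even need orthogonality.

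The substantive role of $\mathbf{W}_1\mathbf{W}_2^{\top}=\mathbf{0}$ appears in the gradient that flows back into the shared embedding $\mathbf{x}=\mathbf{s}_i^t$, and hence into the GAT--GRU encoder. That gradient is
\[
\nabla_{\mathbf{x}}\mathcal{L}=\mathbf{W}_1^{\top}\mathbf{g}_1+\mathbf{W}_2^{\top}\mathbf{g}_2 .
\]
The first term lies in $\mathrm{row}(\mathbf{W}_1)$ and the second in $\mathrm{row}(\mathbf{W}_2)$. The constraint says every row of $\mathbf{W}_1$ is orthogonal to every row of $\mathbf{W}_2$, so these two subspaces of $\mathbb{R}^d$ are orthogonal. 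Two consequences follow:
\[
\langle \mathbf{W}_1^{\top}\mathbf{g}_1,\mathbf{W}_2^{\top}\mathbf{g}_2\rangle=\mathbf{g}_1^{\top}\mathbf{W}_1\mathbf{W}_2^{\top}\mathbf{g}_2=0,
\qquad
\mathbf{W}_2\,\mathbf{W}_1^{\top}\mathbf{g}_1=\mathbf{0}.
\]
The second identity means a gradient step on $\mathbf{x}$ driven by $\mathcal{L}_1$ leaves $\mathbf{v}=\mathbf{W}_2\mathbf{x}$ exactly unchanged, and symmetrically for $\mathcal{L}_2$. This is the precise sense in which ``optimizing one objective does not interfere with the other.''

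The remaining claim is that both objectives reach their optima ``without compromise.'' I would formalize it by decomposing $\mathbf{x}=\mathbf{x}_1+\mathbf{x}_2+\mathbf{x}_\perp$ along $\mathrm{row}(\mathbf{W}_1)\oplus\mathrm{row}(\mathbf{W}_2)\oplus(\cdot)^\perp$. Then $\mathcal{L}$ is separable in $(\mathbf{x}_1,\mathbf{x}_2)$, so its infimum is the sum of the separate infima, each attained independently.

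The main obstacle is not the calculus, which is routine. It is stating the correct conclusion: the literal claim must be replaced by the cross-gradient version, and the ``no interference'' claim must be located at the shared representation. A secondary caveat is that the encoder parameters feed $\mathbf{x}$ for many nodes at once, so per-parameter encoder gradients can still mix. I would restrict the non-interference statement to the embedding space, or to the single-node case.
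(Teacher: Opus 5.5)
Your proposal is correct, and its core matches the paper's proof. The appendix also uses only the chain rule to show that the cross-gradients $\nabla_{\mathbf{W}_2}\mathcal{L}_1$ and $\nabla_{\mathbf{W}_1}\mathcal{L}_2$ vanish, and its final line concludes exactly this cross-gradient version. So your correction of the literal claim, which your counterexample rightly refutes, agrees with what the paper actually proves.

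The routes diverge on where $\mathbf{W}_1\mathbf{W}_2^{\top}=\mathbf{0}$ is used. The paper asserts that for any parameter $\theta$ feeding $\mathbf{x}$ one has $\langle\nabla_\theta\mathcal{L}_1,\nabla_\theta\mathcal{L}_2\rangle\propto\mathrm{tr}(\mathbf{W}_1\mathbf{W}_2^{\top})=0$. You instead prove orthogonality only for the gradient with respect to $\mathbf{x}$ itself, and yours is the defensible version. With $J=\partial\mathbf{x}/\partial\theta$, the true inner product is $\mathbf{g}_1^{\top}\mathbf{W}_1JJ^{\top}\mathbf{W}_2^{\top}\mathbf{g}_2$. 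The condition $\mathbf{W}_1JJ^{\top}\mathbf{W}_2^{\top}=\mathbf{0}$ is not implied by the orthogonality constraint; it holds, for example, when $JJ^{\top}$ is a multiple of the identity, as when $\theta=\mathbf{x}$.

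Your extra facts, $\mathbf{W}_2\mathbf{W}_1^{\top}\mathbf{g}_1=\mathbf{0}$ and the separability of $\mathcal{L}$ along $\mathrm{row}(\mathbf{W}_1)\oplus\mathrm{row}(\mathbf{W}_2)\oplus(\cdot)^{\perp}$, also give precise content to ``no interference'' and ``without compromise.'' The paper states both but never proves them.

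One small correction to your closing caveat: encoder-parameter gradients mix even for a single node, through the $JJ^{\top}$ factor, not only because parameters are shared across nodes. So it is the restriction to the embedding space, treating $\mathbf{x}$ as the free variable, that makes the non-interference claim true. Restricting to a single node does not.
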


Motivated by Proposition~\ref{thm:gradient_isolation} (Appendix~\ref{app:proof_gradient_isolation}), we decompose node representations $\mathbf{s}_i^t$ into orthogonal role and clustering subspaces as: 
\begin{align}
\mathbf{z}_i^t = [\mathbf{z}_i^{t,\text{role}} \,\, \| \, \,\mathbf{z}_i^{t,\text{comm}}] &= [\mathbf{W}_{\text{role}} \cdot \mathbf{s}_i^t\,\|\, \mathbf{W}_{\text{comm}} \cdot \mathbf{s}_i^t]\\
s.t.,\,\, \mathbf{W}_{\text{role}} \,&\mathbf{W}_{\text{comm}}^{\top}  = \mathbf{0}_{d_r \times d_c}\nonumber
\end{align}
where $\mathbf{z}_i^{t,\text{role}} \in \mathbb{R}^{d_r}$ encodes role features, $\mathbf{z}_i^{t,\text{comm}} \in \mathbb{R}^{d_c}$ captures cluster affiliation, and $\mathbf{W}_{\text{role}} \in \mathbb{R}^{d_r \times d}$ and $\mathbf{W}_{\text{comm}} \in \mathbb{R}^{d_c \times d}$ are two learnable orthogonal projection matrix.

\subsubsection{Modularity-Guided Clustering.}\label{CommLoss} With decomposed orthogonal representations, we now optimize node representations in the cluster subspace to obtain clustering-aware node features.

We first generate a soft cluster assignment matrix $\mathbf{C}^t \in \mathbb{R}^{|V^t| \times K_t}$ from the cluster subspace, where $\mathbf{C}_{ij}^t$ represents the probability that the $i$-th node belongs to the $j$-th cluster, which is formulated as:
\begin{equation}
\mathbf{C}_{ij}^t = \frac{\exp((\mathbf{z}_i^{t,\text{comm}})^{\top} \boldsymbol{\mu}_j^{\text{t}}/\tau_c)}
{\sum_{\ell=1}^{K_{t}} \exp((\mathbf{z}_i^{t,\text{comm}})^{\top} \boldsymbol{\mu}_{\ell}^{\text{t}}/\tau_c)},
\end{equation}
where $\boldsymbol{\mu}_\ell^{t} \in \mathbb{R}^{d_c}$ denotes the $\ell$-th learnable cluster centroid, and $\tau_c$ is a temperature parameter that controls the sharpness of assignment.

\vspace{3pt}
\noindent \textbf{Modularity-based Clustering Loss.}
Previous works~\cite{KDD-24,9531337} directly maximize the modularity objective \begin{small}$\mathcal{Q}_\text{modul} = \mathrm{Tr}(\mathbf{C}^{\top} \mathbf{B} \mathbf{C})$\end{small}, where \begin{small}$\mathbf{B}= \mathbf{A} - \frac{\mathbf{d}\mathbf{d}^{\top}}{2|E^t|}$\end{small} is the modularity matrix with \begin{small}$\mathbf{d}_i=\sum_j\mathbf{A}_{ij}$\end{small}, 
to encourage densely connected nodes to cluster together while separating sparsely connected ones. 
However, this formulation requires \begin{small}$\mathcal{O}(|V^t|^2)$\end{small} complexity, severely limiting the scalability to large graphs. To address this computational bottleneck, we reformulate modularity optimization through an efficient decomposition. By leveraging the algebraic structure of the modularity matrix, we derive an equivalent objective that avoids quadratic complexity as:
\begin{small}
\begin{equation}\label{clustering-loss}
\mathcal{L}_{\text{mod}}^t = -\sum_{(i,j) \in E^t} \text{sim}(\mathbf{z}_i^{t,\text{comm}}, \mathbf{z}_j^{t,\text{comm}}) 
+ \frac{1}{2|E^t|} \left\|\sum_{i \in V^t} \mathbf{d}_i \mathbf{z}_i^{t,\text{comm}}\right\|^2,
\end{equation}
\end{small}where the first term aggregates similarities over existing edges, encouraging connected nodes to have similar embeddings, and the second term acts as a degree-aware regularizer that prevents high-degree nodes from dominating the embedding space. 
This decomposition reduces the complexity from \begin{small}$\mathcal{O}(|V^t|^2)$\end{small} to \begin{small}$\mathcal{O}(|E^t| + |V^t|d_c)$\end{small}, enabling efficient optimization on web-scale graphs while preserving the theoretical properties of modularity maximization.

\noindent \textbf{Temporal Smoothing Loss.} 
To ensure smooth cluster evolution while adapting to structural changes, we regularize assignment consistency for persistent nodes between successive snapshots as:
\begin{equation}\label{smoothing-loss}
\mathcal{L}_{\text{temp}}^t = \frac{1}{|V^t \cap V^{t-1}|} \sum_{i \in V^t \cap V^{t-1}} 
\|\mathbf{C}_{i:}^t - \mathbf{C}_{i:}^{t-1}\|^2 \cdot \exp(-\alpha \cdot \delta_i)
\end{equation}
where \begin{small}$\mathbf{C}_{i:}^t$\end{small} is node $i$'s cluster assignment,  \begin{small}$\delta_i= \frac{\|\mathbf{A}_{i:}^t - \mathbf{A}_{i:}^{t-1}\|_1}{\|\mathbf{A}_{i:}^t\|_1 + \|\mathbf{A}_{i:}^{t-1}\|_1}$\end{small} measures local structural change and $\alpha$ controls adaptation sensitivity. This formulation allows for larger assignment changes for nodes with significant neighborhood modifications while maintaining stability.

\noindent \textbf{Overall Clustering Objective.}
The complete loss function for cluster-aware representation learning can be formulated as:
\begin{equation}
\mathcal{L}_{\text{cluster}} = \mathcal{L}_{\text{mod}}^t + \lambda_1 \mathcal{L}_{\text{temp}}^t \quad s.t., \,\,\mathbf{W}_{\text{role}}\mathbf{W}_{\text{comm}}^{\top}  = \mathbf{0}_{d_r \times d_c}
\end{equation}
where $\lambda_1$ balances the trade-off between modularity maximization and temporal smoothness. This joint optimization ensures that learned clusters are topologically meaningful and temporally stable.

\subsection{Interpretable Node-Roles Learning}
\label{sec-M2}

This module establishes the node-role prototypes as semantic anchors, bridging the continuous graph embedding space and the discrete language space for LLM-driven inference. 
The prototypes are designed to be dynamic, robust, and intrinsically interpretable across node-level, community-level, and evolution-level scales.

\subsubsection{Node Role Definition} With orthogonally decomposed representations that separate role from community features, we focus on the role subspace to define interpretable node-role prototypes that enable semantic understanding of clustering results. 

\noindent \textbf{Definition\,\,1} \textbf{(Node Roles)}. 
Drawing from \textit{Structural Node Role Theory}~\cite{borgatti2000models,newman2018networks}, we define five node prototypes $\mathbf{P}^{\text{node}} = \{\mathbf{p}_1, ..., \mathbf{p}_{5}\}$ serving as functional roles with natural-language interpretation as:
\begin{itemize}
\item $\mathbf{p}_1$: {[Leader]} - High centrality nodes, {influential hubs} maintaining a \textit{stable core set of internal connections}.
\item $\mathbf{p}_2$: {[Contributor]} - Moderate connectivity nodes, consistently present, primarily forming \textit{intra-community connections}.
\item $\mathbf{p}_3$: {[Wanderer]} - Weakly connected nodes at boundaries with \textit{low clustering coefficient} and \textit{high migration potential}.
\item $\mathbf{p}_4$: {[Connector]} - Inter-community connectors with \textit{high betweenness centrality} and \textit{low internal clustering coefficient}.
\item $\mathbf{p}_5$: {[Newcomer]} - Recently joined nodes with \textit{high connection volatility} and \textit{rapid changes in structural metrics}.
\end{itemize}

\noindent \textbf{Node Roles Initialization.}
To ensure meaningful role semantics, 
we initialize the $i$-th prototype as $\mathbf{f}^{\text{prior}}_{i}=[\mathbf{f}^\mathbf{C}_{i};\mathbf{f}^\mathbf{L}_{i};\mathbf{f}^\mathbf{T}_{i}]$, where the centrality features $\mathbf{f}^\mathbf{C}$ include degree, betweenness, and closeness centrality; 
local structure features $\mathbf{f}^{\mathbf{L}}$ include the clustering coefficient and intra-community density; 
temporal volatility features $\mathbf{f}^{\mathbf{T}}$ include connection volatility and role stability. 
These synthetic vectors encode ideal role features (details in Appendix~\ref{AppendB-initialization}). 
We then use a shared Multi-Layer Perceptron ($\text{MLP}_{\theta_1}$) to map the role-specific synthetic feature prior to the role representation subspace as:
\begin{equation}
\mathbf{p}_i = \text{MLP}_{\theta_1}\left( \mathbf{f}^{\text{prior}}_{i} \right) + \boldsymbol{\epsilon}_i, \quad \text{where} \,\, \boldsymbol{\epsilon}_i \sim \mathcal{N}(0, \sigma^2I).
\end{equation}

\subsubsection{Role-Embedding Alignment} To align node representations with the defined role prototypes, we employ contrastive learning in the role subspace. This mechanism enables unsupervised discovery of node roles without requiring manual annotations. Specifically, for each node $i$ with role representation $\mathbf{z}_i^{t,\text{role}} \in \mathbb{R}^{d_r}$, we calculate its affinity to each node-role prototype as follows:
\begin{equation}
\pi_{ik}^t = \frac{\exp(\text{sim}(\mathbf{z}_i^{t,\text{role}}, \mathbf{p}_k)/\tau_r)}{\sum_{j=1}^{5} \exp(\text{sim}(\mathbf{z}_i^{t,\text{role}}, \mathbf{p}_j)/\tau_r)}
\end{equation}
where $\text{sim}(\cdot,\cdot)$ denotes cosine similarity and $\tau_r$ is the temperature parameter. This soft assignment allows nodes to exhibit mixed roles, i.e., a node might be 70\% Leader and 30\% Connector, simultaneously.

\noindent \textbf{Contrastive Prototype Loss.} 
We optimize prototypes and node representations jointly via a contrastive objective that attracts nodes to their dominant prototype while maintaining prototype diversity:
\begin{small}
\begin{equation}\label{contrastive-loss}
\mathcal{L}_{\text{proto}}^t = -\frac{1}{|V^t|}\sum_{v_i \in V^t} \log \pi_{ik^*}^t + \lambda_{\text{2}} \sum_{k=1}^5 \hat{\pi}_k^t \log \hat{\pi}_{k}^t,
\end{equation}    
\end{small}where $k^* = \arg\max_k \pi_{ik}^t$ and \begin{small}$\hat{\pi}_k^t = \frac{1}{|V^t|}\sum_{v_i \in V^t} \pi_{ik}^t$\end{small}. The first term attracts nodes toward their nearest prototype, while the second term prevents prototype collapse and ensures all roles remain active.

\subsubsection{Role-Semantic Alignment} The learned role prototypes bridge neural representations and natural language by generating interpretable descriptions at multiple granularities—from individual nodes to community compositions to temporal evolution patterns.

\noindent \textbf{Community Composition.} For each detected community \begin{small}$\{\mathbf{C}_k^t\}_{k=1}^{K_t}$\end{small}, we derive its compositional signature by aggregating member roles: 
\begin{small}
\begin{equation}
\boldsymbol{\gamma}_{\mathbf{C}_k}^t = \frac{1}{|\mathbf{C}_k^t|} \sum_{v_i \in \mathbf{C}_k^t} \boldsymbol{\pi}_i^t = [\gamma_{\mathbf{C}_k,1}^t, ..., \gamma_{\mathbf{C}_k,5}^t]
\end{equation}
\end{small}where $\boldsymbol{\pi}_i^t = [\pi_{i1}^t, ..., \pi_{i5}^t]$ is node $i$'s role distribution. This composition $\boldsymbol{\gamma}^t$ directly reveals community features for LLM interpretation.

\noindent \textbf{Evolution Pattern.}
Temporal changes in role distributions naturally map to evolution patterns. Given two snapshots, we compute:
\begin{small}
\begin{equation}
\Delta\boldsymbol{\gamma}_{\mathbf{C}_k}^t = \boldsymbol{\gamma}_{\mathbf{C}^{t+1}_k} - \boldsymbol{\gamma}_{\mathbf{C}^t_k}
\end{equation}
\end{small}to help LLMs understand six evolution events, including \textit{Birth, Death, Growth, Contraction, Split, Merge} (details in Appendix~\ref{appendix:events}). 

\noindent \textbf{Natural Language Template Generation.}
Each node can be converted to a semantic description by a parametric template function $\mathcal{T}(\cdot)$, which is listed in {Appendix~\ref{append-prompt1}}. For node $v_i$, we can generate the following semantic description via $\mathcal{T}(\boldsymbol{\pi}_{i}^t,{\boldsymbol{\gamma}^t_{\mathbf{C}}}, {\Delta\boldsymbol{\gamma}^t_{\mathbf{C}}},\mathcal{G},\mathbf{C}, \Theta)$, where $\mathcal{G}$ denotes dynamic graphs and $\Theta$ is the activation threshold:

\begin{center}
\begin{problemBox2}
\footnotesize
{\textbf{Node-level:}} \textit{Node $v_{23}$ exhibits dominant Leader characteristics (70\%) with secondary Contributor traits (20\%). It maintains 45 connections with betweenness centrality 0.82, serving as a critical hub in the network structure.}

{\textbf{Community-level:}} \textit{Node $v_{23}$ belongs to Community $\mathbf{C}_3$ (89 members) with composition [15\% Leaders, 50\% Contributors, 25\% Wanderers, 8\% Connectors, 2\% Newcomers]. As a primary leader in this hierarchical core community, $v_{23}$ coordinates with 12 leaders to maintain structural cohesion.}

{\textbf{Evolution-level:}} \textit{Over the past timestep, the community $v_{23}$' experienced a growth pattern with 18\% newcomer influx while maintaining stable leadership. Node $v_{23}$'s role evolved from 55\% to 70\% Leader confidence, indicating consolidation of its leadership position during expansion phase.}
\end{problemBox2}
\end{center}

\subsection{Hierarchical LLM Reasoning}
\label{sec-M3}

We employ LLMs to perform structured reasoning about community assignments, ensuring that detected communities are not only structurally coherent but also logically meaningful.

\subsubsection{Structured Reasoning} Given the hierarchical semantic descriptions generated by function $\mathcal{T}(\cdot)$, LLMs perform cluster assignment via structured probabilistic reasoning:
\begin{small}
\begin{equation}
\mathbf{Q}^t = \text{LLM}_\text{Reason}\left(\mathcal{T}(\{\boldsymbol{\pi}_i^t\}_{i \in V^t}, \{\boldsymbol{\gamma}_{\mathcal{C}_k}^t,\Delta\boldsymbol{\gamma}_{\mathbf{C}_k}^t\}_{k=1}^{K_t}, \mathcal{G}, \mathbf{C}, \Theta)\right),
\end{equation}
\end{small}where \begin{small}$\mathbf{Q}^t$\end{small} denotes the assignment probabilities. We design a reasoning chain to determine optimal cluster assignments as:
\begin{center}
\begin{problemBox3}
\footnotesize
{\textbf{INPUT}}: Three-level semantic description from template $\mathcal{T}(\cdot)$\\[3pt]
{\textbf{REASONING STEPS}}:\\
\textbf{Step 1 - Role-Community Compatibility Analysis}\\
\textit{Evaluate how node's role distribution aligns with each community's composition}\\
\textbf{Step 2 - Supply-Demand Balance Assessment}\\
\textit{Identify imbalances in role distributions across communities}\\
\textbf{Step 3 - Evolution-Role Alignment}\\
\textit{Match node's role trajectory with community evolution patterns}\\
\textbf{Step 4 - Structural Feasibility Check}\\
\textit{Validate assignment with connectivity patterns}\\
\textbf{Step 5 - Temporal Consistency Evaluation}\\
\textit{Consider historical assignments and migration costs}\\[3pt]
{\textbf{OUTPUT}}: Probability distribution $\mathbf{Q}_{i,k}$ over $K$ communities with justification
\end{problemBox3}
\end{center}

\subsubsection{Consistency Learning}

To align structural clustering and semantic reasoning, we quantify the agreement between clustering assignments $\mathbf{C}^t$ and LLM reasoning $\mathbf{Q}^t$ using:
\begin{equation}
\mathcal{A}^t = \frac{1}{|V^t|} \sum_{i \in V^t} \mathbb{I}[\arg\max_k \mathbf{C}_{ik}^t = \arg\max_k \mathbf{Q}_{ik}^t] \in [0,1],
\end{equation}
where $\mathcal{A}$ is the agreement ratio. Low agreement indicates structural patterns contradict semantic logic. Rather than using all LLM outputs, we selectively use high-confidence semantic insights as:
\begin{equation}\label{consistant-loss}
\mathcal{L}_{\text{consist}} = -\frac{1}{|\mathcal{V}_{\text{conf}}^t|}\sum_{i \in \mathcal{V}_{\text{conf}}^t} \sum_{k=1}^{K_t} \mathbf{Q}_{ik}^t \log \mathcal{C}_{ik}^t
\end{equation}
where $\mathcal{V}_{\text{conf}}^t = \{i : \max_k \mathbf{Q}_{ik}^t > \tau_{\text{conf}}\}$ contains nodes with confident LLM predictions (typically $\tau_{\text{conf}} = 0.8$). This prevents uncertain reasoning from destabilizing structural optimization.

\noindent \textbf{Objective Function}. The total optimization objective becomes:
\begin{equation}
\mathcal{L}_{\text{total}} = \mathcal{L}_{\text{cluster}} + \mathcal{L}_{\text{proto}} + \lambda_{\text{LLM}}(\mathcal{A}^t) \mathcal{L}_{\text{consist}}
\end{equation}
where the adaptive weight $\lambda_{\text{LLM}}(\mathcal{A}^t) = \max(0, 1 - \mathcal{A}^t)$ increases LLM influence when disagreement is high.

\subsubsection{Interpretable Outputting} The framework produces both final assignments and comprehensive explanations.

\noindent \textbf{Final Assignment.}
We combine $\mathbf{C}^t$ with LLM reasoning $\mathbf{Q}^t$ as:
\begin{equation}
\hat{k}_i = \arg\max_k [\alpha \mathbf{C}_{ik}^t + (1-\alpha) \mathbf{Q}_{ik}^t]
\end{equation}
where $\alpha=0.7$ balances structural coherence and  semantic insights.

\noindent \textbf{Explanation Generation.}
Each assignment produces explanations aligned with the five-step reasoning chain:
\begin{center}
\begin{problemBox5}
\footnotesize
{\textbf{Assignment Decision:}} \textbf{Node $v_{23}$ → Community $\mathbf{C}_2$}\\[3pt]
{\textbf{Step 1 - Compatibility}}: How node's roles fit community composition.\\
\textbf{Example}: Leader (70\%) + Connector (20\%) complements $\mathbf{C}_2$'s role gaps.\\
{\textbf{Step 2 - Supply-Demand}}: Role distribution imbalances across communities.\\
\textbf{Example}: $\mathbf{C}_1$ oversupplied (35\% Leaders) vs $\mathbf{C}_2$ undersupplied (5\% Leaders).\\
{\textbf{Step 3 - Evolution Match}}: Node trajectory alignment with community dynamics.\\
\textbf{Example}: Rising Leader confidence (55\%→70\%) matches $\mathbf{C}_2$'s growth needs.\\
{\textbf{Step 4 - Structural Check}}: Connectivity validation for assignment.\\
\textbf{Example}: 12 of 45 connections in $\mathbf{C}_2$; betweenness 0.82 supports transition.\\
{\textbf{Step 5 - Temporal Consistency}}: Historical assignment and migration feasibility.\\
\textbf{Example}:  Connector trait (20\%) enables smooth migration from $\mathbf{C}_1$ to $\mathbf{C}_2$.\\[3pt]
{\textbf{Final Confidence}}: 0.81 (Structural: 0.75, Semantic: 0.88).
\end{problemBox5}
\end{center}

\subsection{Complexity Analysis}

\noindent \textbf{Time Complexity.}
The overall time complexity per snapshot is $\mathcal{O}(L|E^t|d + |V^t|K_td_c + |V^t|d_r)$, where $L$ is the number of GAT layers, $d,d_c,d_r$ is the dimensionality of the embedding space, cluster space, role space, respectively. $K_t$ is number of clusters in the $t$-th timestep.
\noindent \textbf{Space Complexity.}
The space requirement is $\mathcal{O}(|V^t|d + K_td_c)$. For web-scale deployment, embeddings can be computed in mini-batches, requiring only $\mathcal{O}(bd)$ memory where $b$ is batch size. We provide more details about complexity analysis in 
Appendix~\ref{append-complexity}.

\renewcommand\arraystretch{1}
\begin{table}[h]
\footnotesize
\centering 
\caption{Detailed statistics of dynamic graph benchmarks.}
\setlength{\tabcolsep}{3mm}{
\begin{tabular}{@{}lccc@{}}
\toprule 
\textbf{Dynamic Graphs} & \textbf{\# of Nodes} & \textbf{\# of Edges} & \textbf{\# of Snapshots} \\
\midrule 
\midrule
Birth-Death {(BD)} & 30K\,\&\,100K   & 12M\,\&\,24M    & 10\,\&\,20 \\
Expand-Contract (EC) & 30K\,\&\,100K   & 13M\,\&\,26M    & 10\,\&\,20 \\
Disappear-Reappear (DR) & 30K\,\&\,100K   & 13M\,\&\,28M    & 10\,\&\,20 \\
Merge-Split (MS) & 30K\,\&\,100K   & 14M\,\&\,29M    & 10\,\&\,20 \\
\midrule
Wikipedia   & 8,400 & 162,000   & 5 \\
Dublin  & 11,000    & 415,900   & 5 \\
arXiv   & 28,100    & 4,600,000 & 5 \\
Stack-Overflow (SO) &2,601,977	&{63,497,050} & 5 \\
Flickr  & 2,302,925 & 33,100,000    & 5    \\
Youtube & {3,200,000} & 12,200,000    & 5    \\
\bottomrule
\end{tabular}}
\label{table1}
\end{table}

\begin{table*}[t]
\footnotesize
\centering
\caption{
Overall performances on dynamic graphs. \textbf{Bold} and \underline{\textit{Underline}} indicate the best 
and second-best performing methods. Symbol $\dag$ indicates that DyG-RoLLM significantly surpassed all 
baselines with a p-value $<0.005$. N/A means that it cannot be executed due to memory and running time constraints.}
\setlength{\tabcolsep}{0.2mm}{
\label{main_table:performance_on_artificial}
\begin{tabular}{@{}lllcccccccccccccccccccccccc@{}}
\toprule
\multicolumn{3}{c}{\multirow{2}{*}{\textbf{\textit{Methods}}}}
& \multicolumn{2}{c}{\textbf{\textcolor{black}{BD-30K}}}
& \multicolumn{2}{c}{\textbf{\textcolor{black}{EC-30K}}}
& \multicolumn{2}{c}{\textbf{\textcolor{black}{BD-100K}}}
& \multicolumn{2}{c}{\textbf{\textcolor{black}{DR-100K}}}
& \multicolumn{2}{c}{\textbf{MS-100K}}
& \multicolumn{2}{c}{\textcolor{black}{\textbf{Wikipedia}}} 
& \multicolumn{2}{c}{\textcolor{black}{\textbf{Dublin}}} 
& \multicolumn{2}{c}{\textcolor{black}{\textbf{arXiv}}} 
& \multicolumn{2}{c}{\textcolor{black}{\textbf{SO}}} 
& \multicolumn{2}{c}{\textbf{Flickr}}
& \multicolumn{2}{c}{\textbf{Youtube}}  
\\ 
\cmidrule(lr){4-5} \cmidrule(lr){6-7} \cmidrule(lr){8-9} \cmidrule(lr){10-11} \cmidrule(lr){12-13} \cmidrule(lr){14-15}  
\cmidrule(lr){16-17} \cmidrule(lr){18-19}  \cmidrule(lr){20-21}  
\cmidrule(lr){22-23} \cmidrule(lr){24-25} 
\multicolumn{3}{c}{} &\textit{NMI\,\,\,} &\textit{\,\,\,NF1\,\,\,}         
&\textit{NMI} &\textit{NF1}  &\textit{NMI}  &\textit{NF1}  &\textit{NMI}  &\textit{NF1} &\textit{NMI}  &\textit{NF1}  &\textit{NMI}  &\textit{NF1}  &\textit{NMI}  &\textit{NF1}  &\textit{NMI}  &\textit{NF1}  &\textit{NMI}  &\textit{NF1}  &\textit{NMI}  &\textit{NF1} &\textit{NMI}  &\textit{NF1} \\
\midrule
\midrule
\multirow{6}{*}{\rotatebox[origin=c]{90}{{\centering {NN-based}}}} 
&&node2vec~\cite{grover2016node2vec}  
& 93.8\,\,\, & \,\,\,85.8\,\,\, & \,\,\,93.2 & 83.6 & 90.3 & 84.2  
& 91.2 & 85.6 & 89.0 & 78.2 & 33.5 & 10.2 
& 50.3 & 23.2 & 44.3 & 26.8 & 40.3 & 23.3 
& 42.4 & 25.5 & 42.8 & 24.5\\
&&RNNGCN~\cite{DBLP:conf/aaai/YaoJ21}   
& 96.6\,\,\, & \,\,\,84.5\,\,\, & \,\,\,95.9 & 85.1 & 90.8  & 84.9 
& 92.1 & 84.3 & 91.2 & 80.9 & 40.3  & {22.5} 
& 52.2 & {31.6} & 45.4 & 26.2 & 40.8  & 23.8 
& {48.5} & 30.2 & 47.6 & 32.3\\
&&SepNE~\cite{li2019sepne}
& 92.5\,\,\, & \,\,\,{89.4}\,\,\, & \,\,\,92.1 & 81.8 & 89.1  & 82.2  
& 89.0 & 81.1 & 89.2 & 78.6 & 31.4 & 9.8  
& 49.2 & 21.0 & 42.8 & 25.8 & 39.2  & 22.1 
& 40.3 & 22.4 & 39.5 & 21.0\\
&&ROLAND~\cite{DBLP:conf/kdd/YouDL22}   
& 95.5\,\,\, & \,\,\,83.2\,\,\, & \,\,\,94.1 & 83.8 & \underline{92.8}  & 85.1 
& {93.3} & {85.8} & {92.8} & {81.6} 
& 42.2  & 22.3 & {53.6} & {31.6} 
& {46.8} & {27.6} & 44.8  & 29.2 
& {47.5} & {31.4} & {48.4}  & {33.2} \\
&&TGC~\cite{DBLP:conf/iclr/001400T00024} 
& 95.3\,\,\, & \,\,\,83.0\,\,\, & \,\,\,93.8 & 83.6 & 92.0  & 84.5  
& 92.8 & 85.4 & 91.5 & 81.2
& 41.3 & 22.3 & 52.8 & 31.3 & 45.8 & 26.8 & 44.6  & 29.0 
& 47.8 & {31.6} & 47.9 & 32.6\\
&&DSCPCD~\cite{10017356} 
&89.2\,\,\, &\,\,\,69.8\,\,\, &\,\,\,88.9 &70.3 & 81.2  & 65.8  
&79.6 &63.5 &81.0 &64.4 
&28.2 &7.3 &32.4 &11.9 &31.8 &10.9  & 34.4  & 18.6 
&34.3 & 18.2 & 32.6 &15.9 \\
\midrule
\multirow{6}{*}{\rotatebox[origin=c]{90}{{\centering {MF-based}}}} 
&&PisCES~\cite{LiuF1801} 
& 91.2\,\,\, & \,\,\,41.6\,\,\, & \,\,\,92.6 & 49.0 &N/A  &N/A  
&N/A &N/A &N/A &N/A  
& 32.1 & 9.9  & 46.3 & 16.2 & 38.2 & 14.5 &N/A  &N/A 
& N/A & N/A  & N/A & N/A     \\
&&DYNMOGA~\cite{Folina2013} \,\,
& \underline{98.1}\,\,\, & \,\,\,78.1\,\,\, & \,\,\,{98.2} & 65.3 &N/A  &N/A   
&N/A &N/A &N/A &N/A
& 36.2 & 9.9  & 49.8 & 20.1  & 39.1 & 24.5 &N/A  &N/A 
& N/A   & N/A & N/A & N/A \\
&&NE2NMF~\cite{DBLP:journals/kbs/LiZDGM21} 
& 97.1\,\,\, & \,\,\,76.1\,\,\, & \,\,\,97.5 & 63.1 &N/A  &N/A  
&N/A &N/A &N/A &N/A
& 34.1 & 8.2  & 47.9 & 18.9  & 38.2 & 22.9 &N/A  &N/A 
& N/A   & N/A & N/A & N/A\\
&&RTSC~\cite{DBLP:conf/cikm/YouHKFO21} 
& 92.8\,\,\, & \,\,\,55.3\,\,\, & \,\,\,92.1 & 53.2 &N/A  &N/A  
&N/A &N/A &N/A &N/A
& 30.6 & 11.3 & 46.6 & 19.3 & 38.2 & 20.2  &N/A  &N/A 
& N/A   & N/A & N/A & N/A \\
&&RDMA~\cite{DBLP:journals/www/RanjkeshMH24} 
& 95.3\,\,\, & \,\,\,69.8\,\,\, & \,\,\,94.8 & {85.5} &N/A  &N/A 
&N/A &N/A &N/A &N/A & 33.8 &10.2 
& 47.2 & 18.6 &41.6 & 25.2 &N/A  &N/A  
&N/A &N/A &N/A &N/A\\
&&DyG-MF~\cite{li2025revisiting}
& $\textbf{99.9}$\,\,\, 
& \,\,\,\underline{90.2}\,\,\,  
& \,\,\,\underline{99.2}  
& \underline{90.9}  
& {92.3}  
& \underline{86.8}  
& \underline{94.3} 
& \underline{86.5}  
& \underline{94.4}  
& \underline{83.2}  
& \underline{50.4} 
& \underline{25.8} 
& \underline{56.1}
& \underline{33.7} 
& \underline{51.8} 
& \underline{30.2} 
& \underline{45.5}  
& \underline{29.6} 
& \underline{52.3} 
& \underline{33.6} 
& \underline{51.8} 
& \underline{34.5} 
\\ 
\midrule
\multirow{3}{*}{\rotatebox[origin=c]{90}{{\centering {LLMs}}}} 
&&ComGPT~\cite{ni2024comgpt} 
& 90.4\,\,\, & \,\,\,70.2\,\,\, & \,\,\,90.2 & 71.3 
& 82.5 & 65.9 & 81.2 & 65.4 & 83.4 & 65.5 
& 30.5 & 8.9 & 45.9 & 19.4 & 35.5 & 16.7 
& 38.2  & 20.1  & 38.2 & 20.6 & 36.2 & 20.6
\\
&&CommLLM~\cite{gujral2025llms} 
& 88.3\,\,\, & \,\,\,68.7\,\,\, & \,\,\,86.2 & 69.1 
& 79.3 & 63.2 & 75.8 & 60.1 & 79.2 & 62.3 
& 26.4 & 7.2  & 38.8 & 14.4 & 30.3 & 10.5 
& 32.2 & 17.3 & 33.3 & 18.2 & 30.8 & 15.1
\\
&\,\,\quad&DyG-RoLLM
& $\textbf{99.9}$\,\,\,  
& \,\,\,$\textbf{90.4}$\,\,\,  
& \,\,\,$\textbf{99.4}$  
& $\textbf{\,\,\,91.7}^{\dag}$  
& $\textbf{\,\,\,94.3}^{\dag}$ 
& $\textbf{\,\,\,90.9}^{\dag}$ 
& $\textbf{\,\,\,96.2}^{\dag}$ 
& $\textbf{\,\,\,88.6}^{\dag}$  
& $\textbf{\,\,\,95.4}^{\dag}$  
& $\textbf{\,\,\,84.8}^{\dag}$  
& $\textbf{\,\,\,51.6}^{\dag}$ 
& $\textbf{\,\,\,26.7}^{\dag}$ 
& $\textbf{\,\,\,57.6}^{\dag}$
& $\textbf{\,\,\,35.2}^{\dag}$ 
& $\textbf{\,\,\,53.3}^{\dag}$ 
& $\textbf{\,\,\,32.1}^{\dag}$ 
& $\textbf{\,\,\,47.6}^{\dag}$ 
& $\textbf{\,\,\,30.9}^{\dag}$ 
& $\textbf{\,\,\,54.2}^{\dag}$ 
& $\textbf{\,\,\,34.8}^{\dag}$ 
& $\textbf{\,\,\,52.7}^{\dag}$ 
& $\textbf{\,\,\,35.3}^{\dag}$ 
\\
\bottomrule
\end{tabular}}
\end{table*}

\section{Experiment}
\label{experiments}

\noindent \textbf{Datasets.} 
As shown in Table~{\ref{table1}}, following previous works~\cite{DBLP:journals/kbs/LiZDGM21,DBLP:conf/cikm/YouHKFO21}, we evaluate baselines on four synthetic dynamic graphs and six real-world dynamic graphs with varying number of nodes and edges. 
Synthetic Green datasets~\cite{Folina2013} generate dynamic graphs considering four evolution events including  \textit{Birth-Death (BD)}: existing clusters are removed or generated by randomly selecting nodes from other clusters;  
\textit{Expand-Contract (EC)}: clusters are expanded or contracted;
\textit{Disappear-Reappear (DR)}: clusters are randomly hidden and reappeared; 
\textit{Merge-Split (MS)}: clusters are split or merged. We also evaluate on various real-world domains, including \textit{Academic Graphs}: arXiv~\cite{DBLP:conf/kdd/LeskovecKF05}; \textit{Social Graphs}: Dublin~\cite{ISELLA2011166} and Flickr~\cite{mislove-2008-flickr}
and \textit{Website Interaction Graphs}: Wikipedia~\cite{DBLP:conf/icwsm/LeskovecHK10}, Stack-Overflow (SO)~\cite{paranjape2017motifs}, and Youtube~\cite{mislove-2008-flickr}.

\noindent \textbf{Baselines and Backbones.} 
We compare DyG-RoLLM with 14 best-performing baselines, \textit{i.e.,} Neural Network (NN)-based methods: 
node2vec~\cite{grover2016node2vec},
RNNGCN~\cite{DBLP:conf/aaai/YaoJ21}, 
SepNE~\cite{li2019sepne}, 
ROLAND~\cite{DBLP:conf/kdd/YouDL22}, 
TGC~\cite{DBLP:conf/iclr/001400T00024}, 
and DSCPCD~\cite{10017356}; 
and Matrix Factorization (MF)-based methods: PisCES~\cite{LiuF1801}, 
DYNMOGA~\cite{Folina2013}, NE2NMF~\cite{DBLP:journals/kbs/LiZDGM21}, 
RTSC~\cite{DBLP:conf/cikm/YouHKFO21}, RDMA~\cite{DBLP:journals/www/RanjkeshMH24}, 
and DyG-MF~\cite{li2025revisiting};
LLMs-based methods: ComGPT~\cite{ni2024comgpt} and CommLLM~\cite{gujral2025llms}.

\noindent \textbf{Implementation Details.} Following previous works~\cite{LiuF1801,Ma17b}, we use normalized mutual information (NMI)~\cite{Danon_2005} and normalized F1-score (NF1)~\cite{rossetti2017graph} to measure clustering accuracy.
We reproduced baselines using their optimal parameters and reported average performance over five repeated runs with different random seeds. 
We adopt GPT-4o as backbone LLM.
We conducted multiple t-tests with Benjamini-Hochberg~\cite{benjam} correction to assess the statistical significance. 
We took DR-30K as validation set for hyperparameter tuning. 
With grid search, our method achieves the best results with embedding sizes $d=512$, $d_r=128$, $d_c=384$; 
sharpness parameters $\tau_c = \tau_r = 0.5$; and regularization weights $\lambda_1 = 0.1$ and $\lambda_2 = 0.2$.  

\begin{figure}[t]
\centering
\includegraphics[scale=0.26]{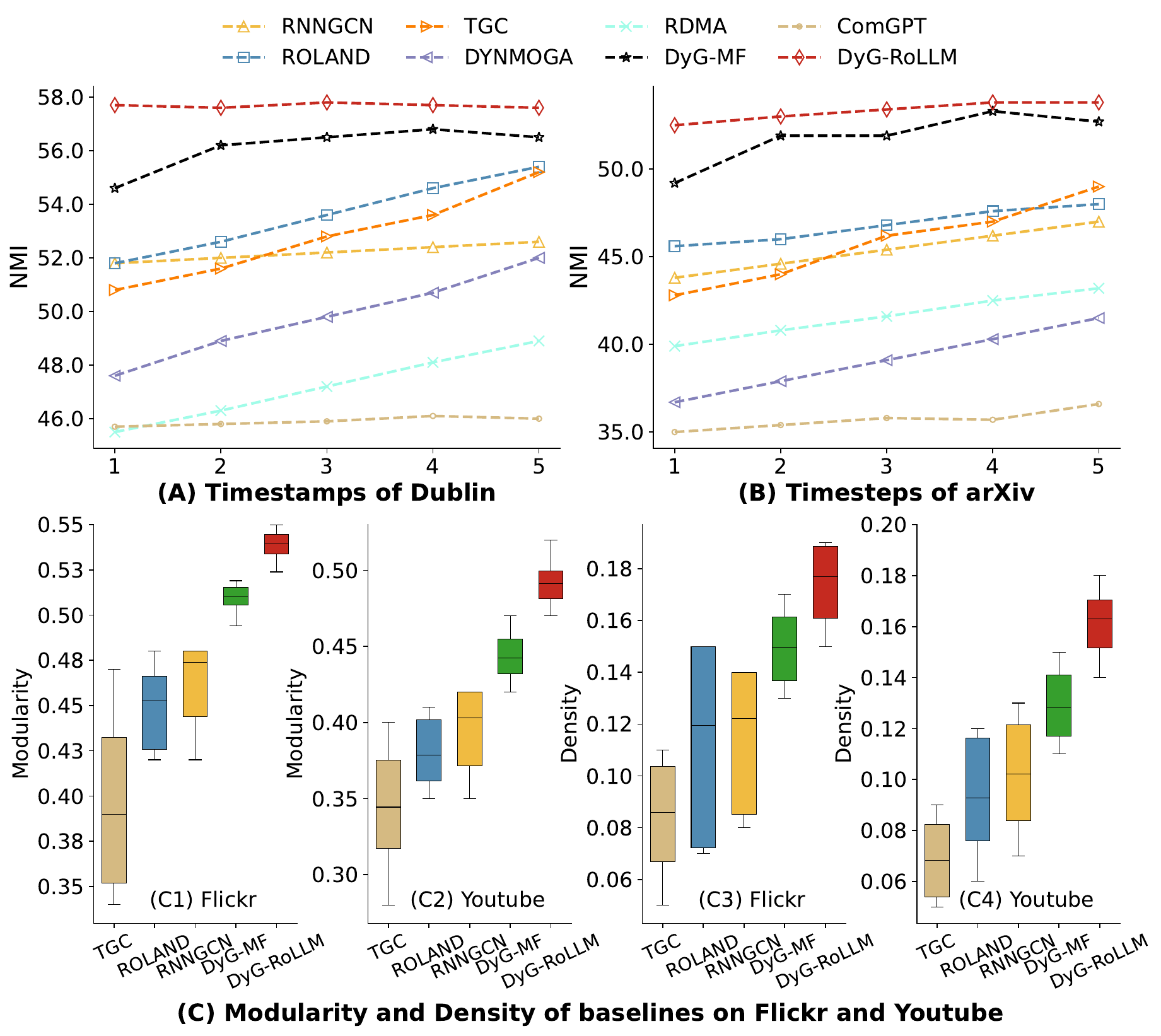}
\caption{Performance comparison of (A-B) varying timestamps and (C) modularity and density of selected baselines. }
\label{main_fig:SMFT2_accuracy}
\end{figure}

\subsection{Performance Evaluation} 
As shown in Table~\ref{main_table:performance_on_artificial}, we report the performance comparison across synthetic and real-world dynamic graphs. DyG-RoLLM consistently achieves the highest NMI and NF1 scores, demonstrating the effectiveness of clustering-aware graph encoding, interpretable node-role learning, and hierarchical LLM reasoning. 
\textbf{First}, compared to NN-based methods RNNGCN and ROLAND, DyG-RoLLM improves NMI scores by 12.9\% and 9.8\% on Flickr and Youtube, respectively. 
This improvement stems from our joint optimization of node embeddings and modularity-based clustering, where orthogonal decomposition ensures that community-specific features are learned independently from role patterns, providing optimally separated representations for clustering tasks.
\textbf{Second}, compared to the best-performing MF-based method DyG-MF, DyG-RoLLM achieves 3\% and 4\% improvements in NMI and NF1 on real-world graphs. 
While MF-based methods produce opaque latent factors that conflate structural proximity with functional similarity, our method leverages interpretable role prototypes to guide LLM reasoning, ensuring both structural coherence through modularity optimization and semantic validity through logical consistency checks.
\textbf{Third}, compared to LLM-based methods, DyG-RoLLM improves NMI by 48.64\% over ComGPT and 66.92\% over CommLLM on average. LLM methods rely solely on language models without grounding in graph structure, producing semantically plausible but topologically incorrect clusters. {Our bidirectional learning ensures LLM reasoning is constrained by structural evidence while GNN optimization benefits from semantic guidance, creating interaction that neither pure GNN nor LLM methods can achieve alone.}

As shown in Figure~\ref{main_fig:SMFT2_accuracy} (A-B), we evaluate temporal performance across all timestamps. DyG-RoLLM demonstrates remarkable stability, maintaining superior performance throughout the entire evolution period with minimal variance, crucial for real-world deployment. 
Furthermore, we employ two label-free metrics, Modularity~\cite{Newman06} and Conductance~\cite{chen2013measuring}, to evaluate community quality without ground-truth dependency. This is essential because NMI and NF1 require potentially noisy or incomplete labels. Specifically, modularity measures intra-community edge density relative to random expectation, while conductance quantifies the cut quality between communities. Figure~\ref{main_fig:SMFT2_accuracy} (C) shows that DyG-RoLLM consistently outperforms the four best-performing baselines on both metrics across real-world graphs, validating its ability to identify structurally coherent communities independent of label quality.

\begin{figure}[h]
\centering
\includegraphics[scale=0.24]{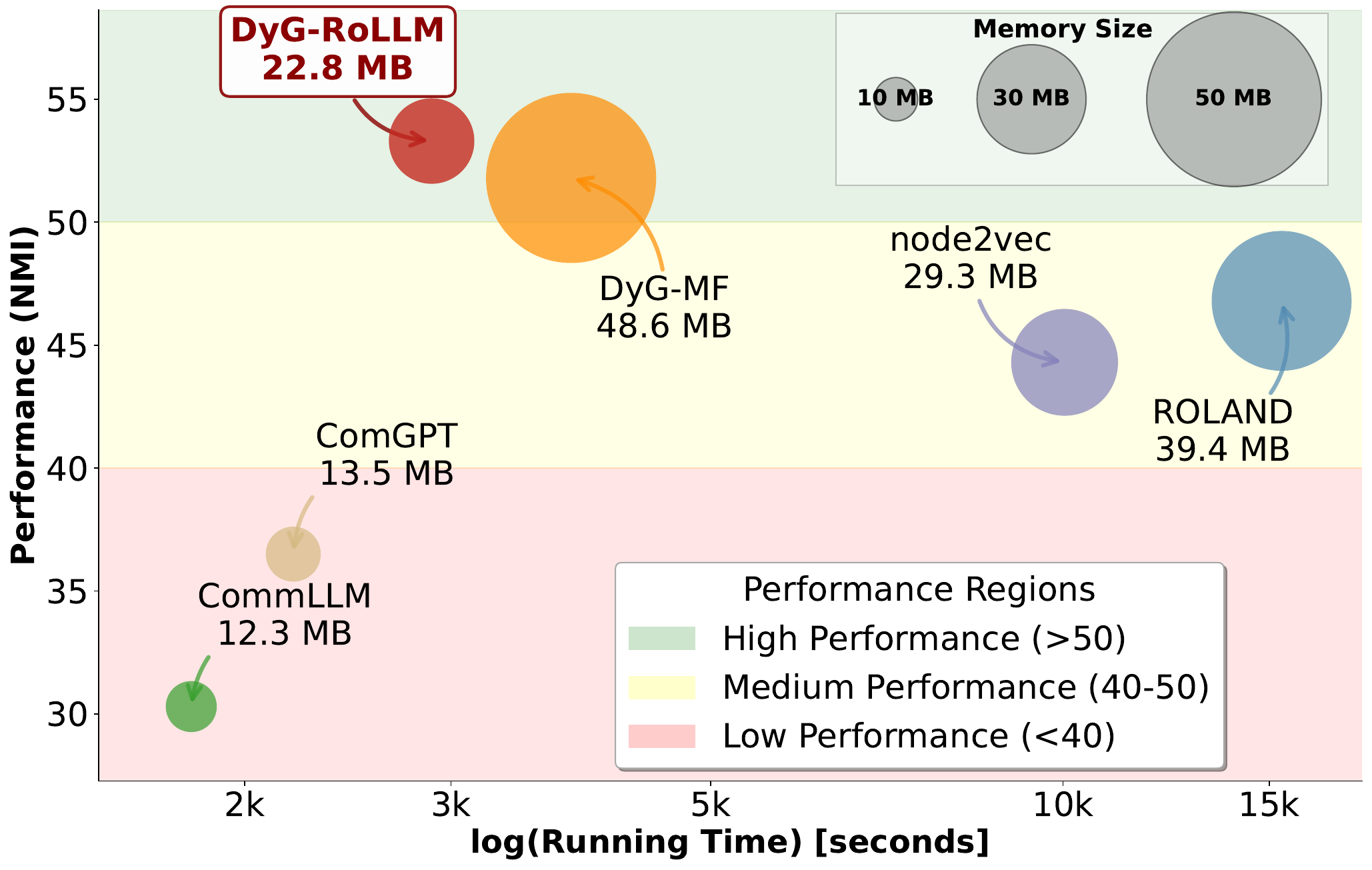}
\caption{Efficiency comparison on large dynamic graphs.}
\label{fig:efficiency}
\end{figure}

\subsection{Efficiency Evaluation}

Figure~\ref{fig:efficiency} illustrates the efficiency-effectiveness trade-off across different baselines. DyG-RoLLM demonstrates a superior balance between computational efficiency and clustering performance. Despite the minimal parameter footprint of LLM-based approaches (ComGPT and CommLLM), their clustering accuracy remains inadequate for practical applications. While DyG-MF achieves competitive performance, its computational overhead and memory footprint limit scalability. Traditional neural approaches exhibit prohibitive resource consumption that precludes large-scale deployment. DyG-RoLLM uniquely combines high clustering accuracy with reasonable computational requirements, making it suitable for real-world applications. These results, showed on the medium-scale arXiv dataset, generalize consistently across all evaluated datasets.

\subsection{Interpretability Evaluation}

To validate our interpretable design, we conduct comprehensive evaluations through quantitative metrics and human assessment.

\subsubsection{Quantitative Metrics}

We evaluate interpretability through two metrics. First, we define {Role Consistency Score (RCS)} that measures whether node roles remain stable over time as:
\begin{footnotesize}
\begin{equation}
\text{RCS} = 1 - \frac{1}{|V^t{\cap}V^{t+1}|(T-1)} \sum_{i \in V^t{\cap}V^{t+1}} \sum_{t=2}^{T} \|\boldsymbol{\pi}_i^t - \boldsymbol{\pi}_i^{t-1}\|_1,
\end{equation}
\end{footnotesize}where RCS=1 denotes stable and meaningful roles. 

Second, we define {Explanation Fidelity Score (EFS)} to verify whether explanations match actual graph structure as:
\begin{footnotesize}
\begin{equation}
\text{EFS} = \frac{1}{N} \sum_{i=1}^{N} \frac{|\text{verified claims}_i|}{|\text{total claims}_i|}
\end{equation}
\end{footnotesize}where $N$ denotes the number of randomly selected structural claims and we verify them against the graph. EFS=0.9 means 90\% of explanatory claims are correct.

\begin{table}[t]
\centering
\footnotesize
\caption{Interpretability on real-world dynamic graphs.}
\label{tab:interpretability}
\begin{tabular}{lcccccc}
\toprule
\multirow{2}{*}{\textbf{Method}} & \multicolumn{2}{c}{\textbf{Wikipedia}} & \multicolumn{2}{c}{\textbf{Dublin}} & \multicolumn{2}{c}{\textbf{SO}} \\
\cmidrule(lr){2-3} \cmidrule(lr){4-5} \cmidrule(lr){6-7}
 & RCS↑ & EFS↑ & RCS↑ & EFS↑ & RCS↑ & EFS↑ \\
\midrule
ComGPT (+CoT) & 0.69 & 0.58 & 0.64 & 0.55 & 0.71 & 0.60 \\
CommLLM (+CoT)  & 0.62 & 0.51 & 0.58 & 0.47 & 0.65 & 0.53 \\
DyG-RoLLM (w/o Role) & 0.75 & 0.66 & 0.68 & 0.68 & 0.83 & 0.82 \\
DyG-RoLLM (w/o CoT) & 0.85 & 0.64 & 0.82 & 0.66 & 0.87 & 0.68 \\
DyG-RoLLM  & \textbf{0.88} & \textbf{0.89} & \textbf{0.88} & \textbf{0.90} & \textbf{0.92} & \textbf{0.91} \\
\bottomrule
\end{tabular}
\end{table}

Since baselines lack interpretability, we adapt them for fair comparison. 
For \textbf{ComGPT (+CoT)} and \textbf{CommLLM (+CoT)}, we enhance them with CoT prompting to generate step-by-step reasoning about role and cluster assignments.  
For \textbf{DyG-RoLLM (w/o Role)}, we remove role description but retain LLM reasoning. 
\textbf{DyG-RoLLM (w/o CoT)} keeps roles but uses direct assignment without CoT reasoning. 
As shown in Table~\ref{tab:interpretability}, DyG-RoLLM achieves superior interpretability by two key advantages. First, our method significantly outperforms LLM-only baselines on RCS because explicit role prototypes provide stable semantic anchors that prevent random explanation fluctuations. Second, DyG-RoLLM attains EFS>0.89, while enhanced baselines remain below 0.7. 
This gap reveals that pure LLM methods generate plausible-sounding but factually incorrect explanations. The performance drop in our ablations confirms that both role prototypes and structured reasoning are essential.

\begin{table}[h]
\centering
\footnotesize
\caption{Human evaluation with average scores on 1-5 scale.}
\label{tab:human_eval}
\setlength{\tabcolsep}{1.4mm}{
\begin{tabular}{lccccc}
\toprule
\textbf{Method} & \textbf{Clarity} & \textbf{Soundness} & \textbf{Complete} & \textbf{Trust} & \textbf{Action} \\
\midrule
ComGPT (+CoT) & 3.2 & 2.8 & 2.6 & 2.9 & 2.5 \\
CommLLM (+CoT) & 2.9 & 2.5 & 2.4 & 2.6 & 2.3 \\
DyG-RoLLM (w/o Role) & 3.5 & 3.3 & 3.1 & 3.4 & 3.2 \\
DyG-RoLLM (w/o CoT) & 3.8 & 3.6 & 3.4 & 3.7 & 3.5 \\
DyG-RoLLM (full) & \textbf{4.4} & \textbf{4.5} & \textbf{4.3} & \textbf{4.6} & \textbf{4.2} \\
\bottomrule
\end{tabular}}
\end{table}

\subsubsection{Human Evaluation} Three domain experts (doctoral students in graph mining) independently evaluated 50 randomly selected explanations on Wikipedia using a 5-point Likert scale (1=poor, 5=excellent) across five criteria: \textbf{Clarity} (comprehensibility), \textbf{Soundness} (logical validity), \textbf{Completeness} (coverage of relevant factors), \textbf{Trustworthiness} (decision confidence), and \textbf{Actionability} (intervention insights). Table~\ref{tab:human_eval} shows that DyG-RoLLM achieves higher scores on all criteria, with Clarity=4.4 and Soundness=4.5, approaching "excellent" ratings. 
The high trustworthiness indicates that experts can trust these explanations for decision making. Enhanced baseline score below 3.2, confirming that LLM reasoning without structural grounding produces less reliable explanations.

\begin{figure}[h]
\centering
\includegraphics[scale=0.4]{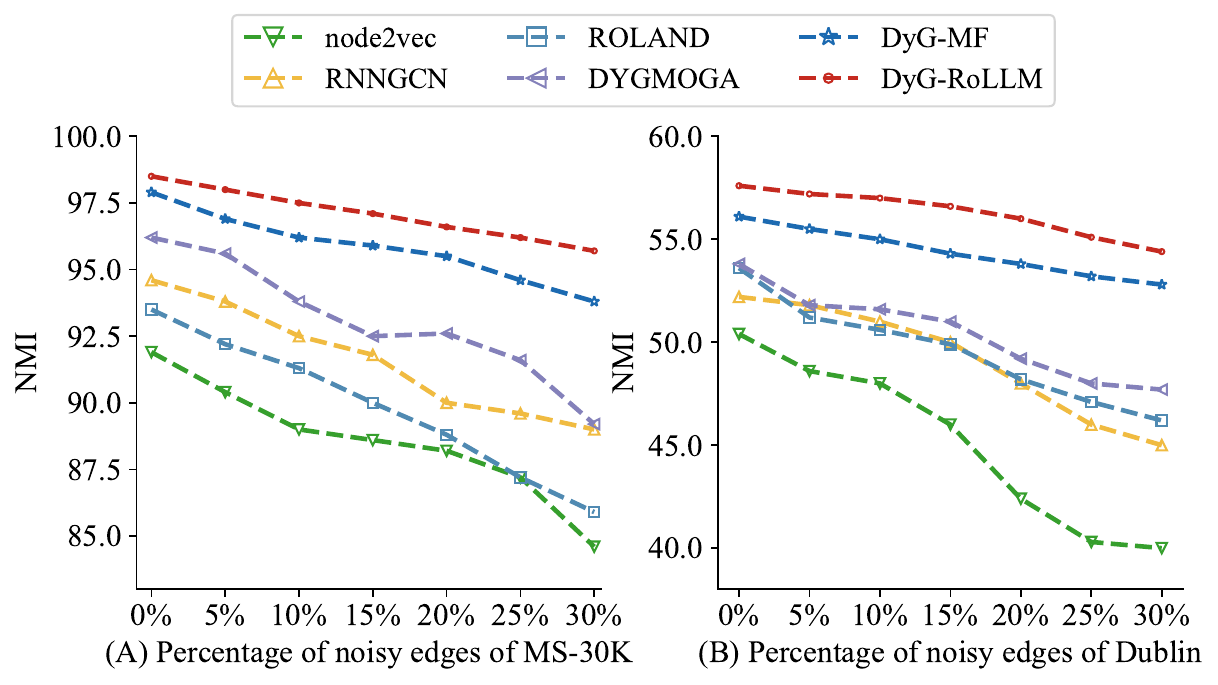}
\caption{Noise Attacks. NMI w.r.t. percentage of noisy edges.}
\label{fig:noisy}
\end{figure}

\subsection{Robustness Evaluation}

Real-world dynamic graphs often contain noise and exhibit irregular evolution patterns. To evaluate robustness, we contaminate graphs by injecting 5\%-30\% random edges per snapshot following~\cite{DBLP:conf/cikm/TanYL22}. 
As shown in Figure~\ref{fig:noisy}, DyG-RoLLM demonstrates superior noise resilience in both datasets. On MS-30K/Dublin, DyG-RoLLM experiences only 2.8\%/5.6\% NMI degradation in 30\% noise compared to 4.2\%/5.9\% for DyG-MF. This robustness stems from three key designs: (1) orthogonal decomposition isolates role learning from noisy community boundaries, preventing error propagation; (2) role prototypes provide stable semantic anchors that filter spurious variations; (3) LLM reasoning validates assignments through logical consistency, rejecting noise-induced implausible structures. The consistent superiority across varying noise levels confirms DyG-RoLLM's practical applicability in noisy real-world scenarios.

\begin{table}[h]
\centering
\footnotesize
\caption{Ablation and strategy replacement of DyG-RoLLM.}
\label{tab:ablation}
\setlength{\tabcolsep}{2.5mm}{
\begin{tabular}{clcccc}
\toprule
\multicolumn{2}{c}{\multirow{2}{*}{\textbf{\textit{Methods}}}}  
& \multicolumn{2}{c}{\textbf{Wikipedia}} & \multicolumn{2}{c}{\textbf{arXiv}}  \\
\cmidrule(lr){3-4} \cmidrule(lr){5-6} &
 & NMI↑ & NF1↑ & NMI↑ & NF1↑  \\
\midrule
\multirow{7}{*}{\rotatebox[origin=c]{90}{{\centering {Remove}}}} 
&w/o Orthogonal Subspace & 42.2 & 22.4 & 43.5 & 15.8 \\
&w/o Clustering loss Eq.(\ref{clustering-loss}) & 50.3 & 25.6 & 51.6 & 30.0 \\
&w/o Temporal loss Eq.(\ref{smoothing-loss}) & 51.0 & 26.3 & 52.5  & 31.7  \\
&w/o Contrastive loss Eq.(\ref{contrastive-loss}) & 50.6 & 25.7 & 51.7 & 30.1  \\
&w/o Consistent loss Eq.(\ref{consistant-loss}) & 50.0 & 25.1 & 51.2 & 29.7 \\
&w/o Role-Guided Description & 48.3 & 24.1 & 46.2 & 16.4  \\
&w/o LLM Reasoning & 47.2 & 23.8 & 45.1 & 16.0  \\
\midrule
\multirow{4}{*}{\rotatebox[origin=c]{90}{{\centering {Replace}}}} 
&Random Prompt Init & 51.0 & 26.4 & 52.9 & 32.0  \\
&GPT-3.5-turbo & 50.3 & 25.9 & 52.0 & 31.4  \\
&Claude-3.5 & 50.8 & 26.0 & 52.6 & 32.0  \\
&Llama2-7B & 50.1 & 25.8 & 51.9 & 31.2  \\
\midrule
&DyG-RoLLM (full)  & \textbf{51.6} & \textbf{26.7} & \textbf{53.3} & \textbf{32.1} \\
\bottomrule
\end{tabular}}
\end{table}

\subsection{Ablation Study}
\label{sec:ablation_study}

We conduct comprehensive ablation studies to evaluate the necessity of each component in DyG-RoLLM. we adopt seven variants: \textit{w/o Orthogonal Subspace} denotes using a unified embedding for both clustering and prototype learning instead of separate subspaces; \textit{w/o Role-guided Description} removes the three-level semantic descriptions for each node; and \textit{w/o LLM Reasoning} bypasses CoT reasoning and directly predicts community assignments. The remaining variants ablate individual loss functions to assess their contributions. As shown in Table~{\ref{tab:ablation}}, removing any of these components negatively impacts performance on graph clustering, demonstrating their effectiveness and necessity.
Furthermore, we conduct four strategy replacement experiments using random prototype initialization and different LLM backbones. The results demonstrate remarkable robustness: performance degradation remains minimal across all alternatives, indicating that DyG-RoLLM is not overly dependent on specific implementation choices and can be adapted to various deployment scenarios with different computational constraints.

\begin{figure}[h]
\centering
\includegraphics[scale=0.22]{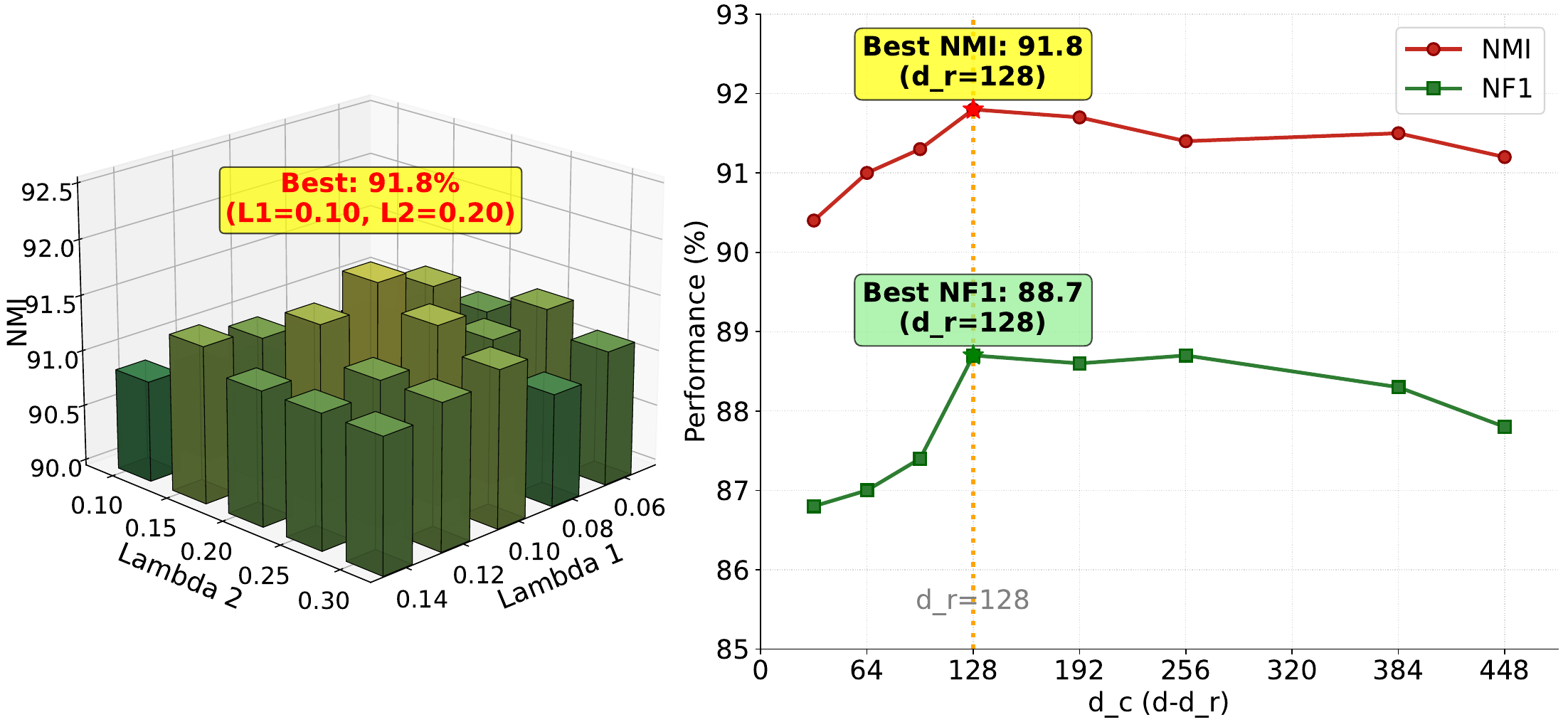}
\caption{Hyperparameter tuning results.}
\label{Hyper_parameter_tuning}
\end{figure}

\subsection{Hyperparameter Sensitivity Analysis}

We conducted sensitivity analysis for critical hyperparameters $\lambda_1$ and $\lambda_2$, as well as the dimensions of cluster space $d_c$ and role space $d_r$ on the DR-30K dataset. As shown in Figure~\ref{Hyper_parameter_tuning}, DyG-RoLLM achieves optimal performance at $\lambda_1=0.1$ and $\lambda_2=0.2$. Increasing either parameter, which amplifies the weights of temporal and diversity components respectively, degrades model performance. These values were fixed for all experiments. Regarding dimensional allocation, the model performs best with $d_r=128$ and $d_c=384$, suggesting that cluster representation requires more features than role description. Notably, performance remains stable across various dimensional combinations, demonstrating the robustness of our approach and its practical applicability across different settings.

\section{Conclusion}
\label{conclusion}

In this paper, we presented DyG-RoLLM, a framework for interpretable dynamic graph clustering. By introducing orthogonal subspace decomposition to separate role identification from clustering, we solve the conflict between these tasks. Through learnable prototypes as semantic anchors, we transform graph structures into interpretable narratives enabling LLM reasoning about communities. 
Future work will explore continuous-time dynamic graphs, incorporating multimodal node attributes.

\section*{Acknowledgment}
This work was supported by the UTokyo Global Activity Support Program for Young Researchers. We sincerely thank our peers and reviewers for their valuable feedback to enhance our work.

\bibliographystyle{ACM-Reference-Format}
\bibliography{sample-base}

\appendix

\section{Proof of Proposition~\ref{thm:gradient_isolation}}
\label{app:proof_gradient_isolation}

\begin{proof}
We prove gradient isolation by showing that losses in orthogonal subspaces have zero cross-gradients. 
Let \begin{footnotesize}$\mathcal{F}(\mathbf{x}) = [\mathbf{W}_1 \mathbf{x}\,\|\, \mathbf{W}_2 \mathbf{x}]$\end{footnotesize} with $\mathbf{W}_1 \mathbf{W}_2^{\top} = \mathbf{0}$ and $\mathcal{L} = \mathcal{L}_1(\mathbf{W}_1\mathbf{x}) + \mathcal{L}_2(\mathbf{W}_2\mathbf{x})$. Since $\mathcal{L}_1$ depends only on $\mathbf{z}_1 = \mathbf{W}_1\mathbf{x}$, we have 
\begin{equation}
\nabla_{\mathbf{W}_2} \mathcal{L}_1 = \frac{\partial \mathcal{L}_1}{\partial \mathbf{z}_1} \cdot \frac{\partial (\mathbf{W}_1\mathbf{x})}{\partial \mathbf{W}_2} = \mathbf{0}
\end{equation}
Similarly, $\nabla_{\mathbf{W}_1} \mathcal{L}_2 = \mathbf{0}$.
Moreover, for any parameter $\theta$ affecting $\mathbf{x}$:
\begin{equation}
\nabla_{\theta} \mathcal{L}_1 = \frac{\partial \mathcal{L}_1}{\partial \mathbf{z}_1} \mathbf{W}_1 \frac{\partial \mathbf{x}}{\partial \theta}, \quad
\nabla_{\theta} \mathcal{L}_2 = \frac{\partial \mathcal{L}_2}{\partial \mathbf{z}_2} \mathbf{W}_2 \frac{\partial \mathbf{x}}{\partial \theta}
\end{equation}

\noindent The orthogonality $\mathbf{W}_1\mathbf{W}_2^{\top} = \mathbf{0}$ ensures gradient orthogonality:
\begin{equation}
\langle \nabla_\theta \mathcal{L}_1, \nabla_\theta \mathcal{L}_2 \rangle \propto \text{tr}(\mathbf{W}_1\mathbf{W}_2^{\top}) = 0
\end{equation}

\noindent Therefore, $\nabla_{\mathbf{W}_2} \mathcal{L}_1 = \mathbf{0}$ and $\nabla_{\mathbf{W}_1} \mathcal{L}_2 = \mathbf{0}$, enabling independent optimization without interference.
\end{proof}

\section{Initialization of Node-Role Prototypes}
\label{AppendB-initialization}

As shown in Table~\ref{tab:prototype_prior_matrix_revised}, we initialize $\mathbf{f}^{\text{prior}}$ using binary assignments (1.0 for high intensity, 0.0 for low intensity) to ensure semantic distinctiveness. This sparse initialization serves two purposes:

\begin{itemize}
\item \textbf{Semantic separation:} Binary values force maximum separation in embedding space, preventing feature redundancy and accelerating convergence of the prototype alignment loss $\mathcal{L}_{\text{proto}}$.

\item \textbf{Role disambiguation:} Contradictory feature assignments prevent semantic confusion-Newcomers have high volatility but zero centrality, distinguishing them from stable high-degree nodes.
\end{itemize}

This initialization strategy provides semantically grounded starting points essential for stable multi-level prototype learning. The specific features—centrality metrics (degree, betweenness, closeness), local structure indicators (clustering coefficient, intra community density), and temporal dynamics (volatility, stability)—are standard graph metrics widely used in network analysis.

\begin{table*}[h]
\centering
\caption{The Node Role Prototype Prior Matrix ($\mathbf{f}^{\text{prior}}$). The matrix encodes the ideal structural and temporal characteristics of the five predefined roles. A value of $\mathbf{1}$ indicates a required high intensity for the role, while $\mathbf{0}$ indicates a low intensity.}
\label{tab:prototype_prior_matrix_revised}
\resizebox{1\textwidth}{!}{%
\begin{tabular}{l|c|c|c|c|c|c|c}
\toprule
\textbf{Prototype ($\mathbf{p}_i^n$)} & \multicolumn{3}{c|}{\textbf{Centrality ($\mathbf{f}^{\mathbf{C}}$)}} & \multicolumn{2}{c|}{\textbf{Local Structure ($\mathbf{f}^{\mathbf{L}}$)}} & \multicolumn{2}{c}{\textbf{Temporal ($\mathbf{f}^{\mathbf{T}}$)}} \\
\cmidrule(lr){2-4} \cmidrule(lr){5-6} \cmidrule(lr){7-8}
& \textbf{Degree} & \textbf{Betweenness} & \textbf{Closeness} & \textbf{Clustering Coeff.} & \textbf{Intra-Density} & \textbf{Volatility} & \textbf{Stability} \\
\midrule
\textit{Community Leader} ($\mathbf{p}_1^n$) & $\mathbf{1}$ & $\mathbf{0}$ & $\mathbf{1}$ & $\mathbf{1}$ & $\mathbf{1}$ & $\mathbf{0}$ & $\mathbf{1}$ \\
\textit{Active Member} ($\mathbf{p}_2^n$) & $\mathbf{1}$ & $\mathbf{0}$ & $\mathbf{0}$ & $\mathbf{1}$ & $\mathbf{1}$ & $\mathbf{0}$ & $\mathbf{1}$ \\
\textit{Peripheral Member} ($\mathbf{p}_3^n$)& $\mathbf{0}$ & $\mathbf{0}$ & $\mathbf{0}$ & $\mathbf{0}$ & $\mathbf{0}$ & $\mathbf{1}$ & $\mathbf{0}$ \\ 
\textit{Bridge Node} ($\mathbf{p}_4^n$) & $\mathbf{1}$ & $\mathbf{1}$ & $\mathbf{0}$ & $\mathbf{0}$ & $\mathbf{0}$ & $\mathbf{0}$ & $\mathbf{1}$ \\
\textit{Newcomer/Evolver} ($\mathbf{p}_5^n$) & $\mathbf{0}$ & $\mathbf{0}$ & $\mathbf{0}$ & $\mathbf{0}$ & $\mathbf{0}$ & $\mathbf{1}$ & $\mathbf{0}$ \\
\bottomrule
\end{tabular}
}
\end{table*}

\section{Community Evolution Patterns}
\label{appendix:events}

We identify six fundamental evolution patterns based on role distribution changes $\Delta\boldsymbol{\gamma}_{k}^t = \boldsymbol{\gamma}_{k}^{t+1} - \boldsymbol{\gamma}_{k}^{t}$:

\begin{itemize}
\item \textbf{Birth}: New community emerges at $t+1$ with $|C_k^{t+1}| > \tau_{\text{min}}$ and $|C_k^t| = 0$, typically initiated by Newcomers clustering together;

\item \textbf{Death}: Community disappears with $|C_k^t| > 0$ and $|C_k^{t+1}| = 0$;

\item \textbf{Growth}: $\Delta\gamma_{k,5} > 0.3$ (Newcomer influx) with $|\Delta\gamma_{k,1}| < 0.1$ (stable leadership), indicating healthy expansion phase;

\item \textbf{Contraction}: $\Delta\gamma_{k,3} < -0.3$ (Wanderer exodus) with $\Delta\gamma_{k,1} > 0.1$ (leadership concentration), suggesting core group consolidation;

\item \textbf{Split}: $\Delta\gamma_{k,4} < -0.5$ (Connector loss) and emergence of multiple components from $C_k^t$, reflecting internal fragmentation;

\item \textbf{Merge}: $\Delta\gamma_{k,4} > 0.3$ (Connector increase) with community count reduction from $t$ to $t+1$, showing successful integration;

\end{itemize}

These thresholds are empirically determined: $\tau_{\text{min}} = 5$ (minimum community size), role change thresholds of 0.3 capture significant shifts while filtering noise, and the Connector threshold of 0.5 for splits ensures only major bridge losses trigger split detection. These patterns enable LLMs to reason about community dynamics through interpretable role-based changes.

\section{Semantic Description Generation}
\label{append-prompt1}

We design template function $\mathcal{T}(\cdot)$ to convert numerical graph metrics into hierarchical natural language descriptions. Box D presents the template specification that transforms role distributions, community compositions, and evolution patterns into interpretable narratives for LLM reasoning. The template ensures consistent generation of interpretable descriptions that preserve mathematical relationships while enabling natural language understanding for downstream LLM reasoning. 

\section{Time Complexity Analysis}
\label{append-complexity}

\noindent \textbf{Time Complexity.}
The overall time complexity per snapshot is $\mathcal{O}(L|E^t|d + |V^t|Kd_c + |V^t|d_r)$, where $L$ is the number of GAT layers, $d$, $d_c$, $d_r$ are the dimensions of embedding, cluster, and role subspaces respectively, and $K$ is the number of communities. Specifically:
\begin{itemize}
\item GAT encoding: $\mathcal{O}(L|E^t|d)$ for $L$ layers with hidden dimension $d$.
\item Modularity optimization: $\mathcal{O}(|E^t| + |V^t|d_c)$ via our decomposition.
\item Prototype learning: $\mathcal{O}(|V^t|d_r)$ for role assignments.
\item LLM reasoning: $\mathcal{O}(|V^t|)$ API calls (can be batched).
\end{itemize}

Critically, our modularity reformulation reduces complexity from $\mathcal{O}(|V^t|^2)$ to $\mathcal{O}(|E^t|)$, enabling scaling to graphs with millions of nodes. For sparse graphs where $|E^t| = \mathcal{O}(|V^t|)$, the total complexity becomes linear in the number of nodes.

\noindent \textbf{Space Complexity.}
The space requirement is $\mathcal{O}(|V^t|(d + d_c + d_r) + Kd_c)$, consisting of:
\begin{itemize}
\item Node embeddings: $\mathcal{O}(|V^t|d)$ for temporal states.
\item Orthogonal projections: $\mathcal{O}(|V^t|(d_c + d_r))$. 
\item Cluster centroids: $\mathcal{O}(Kd_c)$ where $K \ll |V^t|$.
\end{itemize}

The decomposition ensures $d_c + d_r \leq d$, preventing memory overhead. For web-scale deployment, embeddings can be computed in mini-batches, requiring  $\mathcal{O}(bd)$ memory where $b$ is batch size.

\noindent \textbf{Scalability.}
Our framework scales to million-node graphs through: (i) linear-time modularity optimization avoiding dense matrix operations, (ii) sparse attention in GAT processing only existing edges, and (iii) batched LLM inference with caching for repeated patterns. Empirically, we process graphs with 1M+ nodes in under 10 minutes on a single GPU (see Section~\ref{experiments}).

\begin{center}
\begin{tcolorbox}[  enhanced,
  colback=lime!15!yellow!10,  
  colframe=lime!60!black,      
  colbacktitle=lime!25!yellow, 
  coltitle=black,              
  title=Box 1.  LLM Reasoning Process for Community Correction,left=3pt,right=3pt,label={box:case_study}]
\footnotesize
\textbf{Initial Structural Clustering:} $v_{42}$ → $C_1$ (ML Community, confidence: 0.75)

\textbf{Semantic Description Input:}\\
{{Node}}: Node $v_{42}$ exhibits 45\% Connector and 40\% Contributor characteristics, with 28 connections across multiple research areas.\\
{{Community}}: $C_1$: 60\% Contributors, 30\% Leaders, 5\% Connectors (ML community). $C_3$: 20\% Connectors, 35\% Contributors, 15\% Newcomers (Bio-AI community).\\
{{Evolution}}: "$C_1$ stable with internal focus. $C_3$ rapidly growing with 25\% Newcomer influx, seeking bridge nodes."

\textbf{LLM Chain-of-Thought Reasoning:}\\
{Step 1}: Connector role (45\%) indicates bridging function, not peripheral member.\\
{Step 2}: $C_1$ has role mismatch (needs Contributors, has excess Leaders).\\
{Step 3}: $C_3$ has role alignment (needs Connectors for interdisciplinary growth).\\
{Step 4}: Evolution fit - growing $C_3$ requires bridges, stable $C_1$ does not.\\
{Step 5}: Reassign to $C_3$ despite fewer edges (functional > structural).

\textbf{Corrected Assignment:} Node $v_{42}$ → Community $C_3$ (confidence: 0.82)

\textbf{Validation:} Modularity: 0.68→0.71 | Role Coherence: Improved.\\
Ground truth: $v_{42}$ confirmed as Bio-AI researcher 
\end{tcolorbox}
\end{center}

\section{Case Study}
As shown in Box 1, DyG-MF~\cite{li2025revisiting} classifies $v_{42}$ into $C_1$ (ML community) with 0.75 confidence, treating it as a wanderer ML researcher, despite 8/10 recent papers being Bio-AI collaborations.
DyG-RoLLM correctly identifies $v_{42}$'s Connector role (45\%) and reassigns them as a Bio-AI bridge researcher rather than a peripheral ML member. This role-aware reasoning improves modularity (0.68→0.71) while capturing the nuanced function that topology-based methods miss when they see only the 18/28 ML edge majority.

\end{document}